\newcommand{\Fb}{\boldsymbol{F}}
\newcommand{\Gb}{\boldsymbol{G}}
\newcommand{\Tb}{\boldsymbol{T}}
\newcommand{\Wb}{\boldsymbol{W}}
\newcommand{\Xb}{\boldsymbol{X}}
\newcommand{\ub}{\boldsymbol{u}}
\newcommand{\vb}{\boldsymbol{v}}
\newcommand{\xb}{\boldsymbol{x}}
\newcommand{\yb}{\boldsymbol{y}}
\newcommand{\zb}{\boldsymbol{z}}
\newtheorem{theorem}{Theorem}[section]
\newtheorem{lemma}[theorem]{Lemma}
\newtheorem{proposition}[theorem]{Proposition}
\newtheorem{definition}{Definition}[section]
\DeclareMathOperator*{\argmax}{arg\,max}
\newcommand\independent{\protect\mathpalette{\protect\independenT}{\perp}}
\def\independenT#1#2{\mathrel{\rlap{$#1#2$}\mkern2mu{#1#2}}}
\begin{document}

\twocolumn[
\icmltitle{Discovering Conditionally Salient Features with Statistical Guarantees}

\icmlsetsymbol{equal}{*}

\begin{icmlauthorlist}
\icmlauthor{Jaime Roquero Gimenez}{stats}
\icmlauthor{James Zou}{bds}
\end{icmlauthorlist}

\icmlaffiliation{stats}{Department of Statistics, Stanford University, Stanford USA}
\icmlaffiliation{bds}{Department of Biomedical Data Science, Stanford University, Stanford USA}

\icmlcorrespondingauthor{James Zou}{jamesz@stanford.edu}

\icmlkeywords{Feature selection, Knockoffs}

\vskip 0.3in
]

\printAffiliationsAndNotice{} 

\begin{abstract}
The goal of feature selection is to identify important features that are relevant to explain an outcome variable. Most of the work in this domain has focused on identifying \emph{globally} relevant features, which are features that are related to the outcome using evidence across the entire dataset. We study a more fine-grained statistical problem: \emph{conditional feature selection}, where a feature may be relevant depending on the values of the other features. For example in genetic association studies, variant $A$ could be associated with the phenotype in the entire dataset, but conditioned on variant $B$ being present it might be independent of the phenotype. In this sense, variant $A$ is globally relevant, but conditioned on $B$ it is  no longer locally relevant in that region of the feature space.  We present a generalization of the knockoff procedure that performs \emph{conditional feature selection} while controlling a generalization of the false discovery rate (FDR) to the conditional setting. By exploiting the feature/response model-free framework of the knockoffs, the quality of the statistical FDR guarantee is not degraded even when we perform conditional feature selections. We implement this method and present an algorithm that automatically partitions the feature space such that it enhances the differences between selected sets in different regions, and validate the statistical theoretical results with experiments.
\end{abstract}

\section{Introduction}
Given $d$ features $\Xb = (X_1, \dots, X_d)$ and a response variable $Y$, in many settings only a small subset of $X_i$ are relevant for $Y$. The goal of global statistical feature selection is to identify those relevant features, while ideally providing some statistical control on the rate of false discoveries. This problem has been extensively studied in the statistics and machine learning literature \cite{benjamini1995controlling, holm1979simple, benjamini2001control, efron2012large}. One scientific domain that requires extensive use of feature selection is genomics. There $Y$ corresponds to a phenotype of interest (e.g. whether a patient has a disease or not), and $\Xb$ contains information about individual mutations at genomic sites with high variability, or single nucleotide polymorphisms (SNPs). Researchers are interested to identify the small subset of features/SNPs that affect the disease risks based on information gathered from cohorts of participants with and without the phenotype of interest in genome-wide association studies (GWAS) \cite{hirschhorn2005genome, mccarthy2008genome}. 

We call this \emph{global} feature selection because its goal is to identify the relevant features for $Y$ regardless of their location in the feature space. GWAS looks for the subset of SNPs that best help to predict the phenotype, regardless of the actual values of the SNPs themselves. The purpose of this work is to extend the feature selection problem to a \emph{conditional} feature selection problem, where we want to identify relevant features conditionally on being on a sub-region of the feature space. One motivation for this comes again from genomics. Given a SNP (or a set of SNPs) that we believe is relevant for understanding a disease based on prior biological knowledge, or from a previous GWAS study, we want to run a GWAS conditionally on such SNP having a prescribed value. As an example, a few SNPs in a few loci have been identified as major potential causes breast cancer, such as the mutations in the BRCA1/2 genes \cite{o2010brca1}. We would like to run a GWAS conditioned on not having these variants to identify the other sources of genetic variation that are related to breast cancer.

There are two main advantages when doing \emph{conditional} feature selection. First, a feature that is selected through a global feature selection procedure may no longer be relevant conditioned on a local scale. In genomics, the presence or absence of a mutation can activate or deactivate different biological pathways. Therefore another mutation can be relevant or not for a given phenotype depending on whether its pathway was activated by the first mutation. Being able to distinguish between globally and locally relevant features is therefore essential for understanding the complex interactions between features and outcome. Also, by focusing on a local region, one could increase power to identify locally relevant features, which could potentially remain undiscovered if running a global feature selection procedure. 

Global feature selection procedures are unable to do this kind of conditional feature selection while preserving statistical guarantees \cite{sun2010local, tang2014feature}. Statistical models that allow for inference while conditioning on some value of the features are very limited, relying upon  strong assumptions such as simple linear models that are generally not appropriate for  high-dimensional settings of interest. A second approach for conditional feature selection is to restrict the initial dataset to points lying in the region of interest, so that the outcomes of the selection procedure are only related to such region. However, this approach substantially reduces  the amount of data available to fit the model, leading to more variance in the estimates. Furthermore, this undermines the validity of the assumptions upon which the construction of p-values relies, so that  statistical false discovery guarantees are even less likely to hold.

\paragraph{Our Contributions}
We develop a new method which extends recently developed knockoff procedure to perform conditional feature selection while still guaranteeing that the local false discovery rate (defined in Sec. 2) is controlled. By using the whole dataset to fit our model of the feature distribution $P^{\Xb}$ during the first step of the procedure, we make sure that the FDR control for our local knockoff method is as valid as in the usual global method. We extend the knockoff machinery so that we are able to localize the second step of the procedure, where we construct local importance scores for each feature. The selection sets are obtained conditionally on being in a subregion of the space. The main contributions of this paper are in laying out the new framework for conditional feature selection along with proposing a new knockoff algorithm with mathematical guarantees. We also validate the algorithm on experiments.

\section{Local Null Features and Local False Discovery Rate}

We refer to conditional feature selection the task of identifying a relevant subset of features that explain the response conditionally on being in some subspace of the feature space. As such, we need to define a notion of null feature conditionally on being on a subspace. We will refer to such generalization of null feature as \emph{local} null features; we will use \emph{local} and \emph{conditional} interchangeably to emphasize the conditioning on a local region of the feature space.

\paragraph{Preliminaries} Given a positive integer $d$, let $[d]= \{1,\dots,d\}$ and $\mathcal{P}([d])$ denote the collection of all subsets of $[d]$. For $x\in \mathbb{R}^d$, $r>0$, let $B(x,r)$ be the ball centered at $x$ of radius $r$ for the sup norm. For any two mappings $\alpha, \beta: \mathbb{R}^d \rightarrow \mathcal{P}([d])$, we write $\alpha \subset \beta$ (resp. $\alpha \setminus \beta$) if $\alpha(\xb) \subset \beta(\xb)$ (resp. $\alpha(\xb) \setminus \beta(\xb)$) for all $\xb \in \mathbb{R}^d$. We begin by introducing the usual setting of feature selection procedures. We consider the data $(\mathbb{X}, \mathbb{Y}) = (\Xb_i, Y_i )_{1\leq i \leq n} $ as a sequence of $n$ i.i.d. samples from some unknown joint distribution: $(\Xb_i, Y_i) = (X_{i1},\dots,X_{id},Y_i) \sim P^{\Xb Y}$, $i = 1,\dots,n$. We then define the set of null features $\mathcal{H}_0\subset [d]$ such that $j \in \mathcal{H}_0$ if $X_j \independent Y | \Xb_{-j}$ (where the $-j$ subscript indicates all variables except the $j$th, bold letters indicate vectors, and double-struck letters for the data matrix of stacked vectors). The set of non-null features $\mathcal{H}_1 = [d] \setminus \mathcal{H}_0$, also called alternatives, is important because these capture the truly influential effects and the goal of selection procedures is to identify them. We will denote by $(\Xb,Y)$ the random variables whenever we make probabilistic statements.

\paragraph{Local Null Features} We now consider a local notion of null feature, that is, we are interested in whether any given feature has an impact on the outcome conditionally on the feature set $\Xb$ being on the neighborhood of some point $\xb\in \mathbb{R}^d$. Indeed, the set of relevant features may depend on which region of the feature space we are in. We generalize the definition of a null feature as follows:

\begin{definition}
Define mappings
\begin{equation}
    \mathcal{H}_0^0 : \begin{cases}
    \mathbb{R}^d \rightarrow \mathcal{P}([d]) \\
    \xb \mapsto \mathcal{H}_0^0(\xb)
    \end{cases}
    \quad \text{and} \quad
    \mathcal{H}_1^0 = [d] \setminus \mathcal{H}_0^0
\end{equation}
such that $j \in \mathcal{H}_0^0(\xb)$ if $X_j \independent Y | \Xb_{-j} = \xb_{-j}$. We say that the $j$th feature is a local null at $\xb \in \mathbb{R}^d$ if $j \in \mathcal{H}_0^0(\xb)$. For $r>0$, define 
\begin{equation}
    \mathcal{H}_0^r : \begin{cases}
    \mathbb{R}^d \rightarrow \mathcal{P}([d]) \\
    \xb \mapsto \mathcal{H}_0^r(\xb) = \bigcap_{\zb \in B(\xb, r)}\mathcal{H}_0^0(\zb)
    \end{cases}
\end{equation}
We say that the $j$th feature is a $r$-local null at $\xb \in \mathbb{R}^d$ if $j \in \mathcal{H}_0^r(\xb)$. Finally, we define the set of $r$-local non-nulls at $\xb$ by $\mathcal{H}_1^r(\xb) = [d] \setminus \mathcal{H}_0^r(\xb)$. 
\end{definition}

A straightforward generalization of this definition consists in defining local nulls based on a general subset of $\mathbb{R}^d$, although we will keep this simplified version in what follows. From now on we will assume the joint distribution of $(\Xb, Y)$, has a positive density $p(\xb,y)>0$ with respect to a base product measure, so that we can write the conditional distribution of $Y|\Xb$ as $p(y|\xb)$. Saying $j$ is a (global) null (i.e. $X_j \independent Y |\Xb_{-j}$) is equivalent to $p(y|\xb)$ not depending on $x_j$. In contrast, $j$ is a local null at $\xb$ indicates that, if we fix the values of $\xb_{-j}$, then the expression $p(y|\xb) = p(y|x_j, \xb_{-j})$ as a function of $y$ and $x_j$ does not depend on $x_j$. An immediate consequence is that the set of local nulls does not vary if we move along the null features. Also, identifying local non-nulls at smaller $r$ implies having a more detailed information about the distribution $P^{\Xb Y}$. Our ability to recover the set of local non-nulls for smaller $r$ will be limited by the availability of data in the $r$-radius neighborhood of $\xb$. We present these results more formally as follows:

\begin{proposition}\label{prop:inclusion}
For any $\xb, \zb \in \mathbb{R}^d$, 
\[\xb_{[d]\setminus \mathcal{H}_0^0(\xb)} = \zb_{[d]\setminus  \mathcal{H}_0^0(\xb)} \quad \Rightarrow \quad  \mathcal{H}_0^0(\xb) =   \mathcal{H}_0^0(\zb)
\]
If $r'>r$, then $\mathcal{H}_0^{\infty} = \mathcal{H}_0 \subset \mathcal{H}_0^{r'} \subset \mathcal{H}_0^{r}$ the set of global nulls. Equivalently, $\mathcal{H}_1^{r} \subset \mathcal{H}_1^{r'}$. 
\end{proposition}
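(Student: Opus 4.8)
The plan is to treat the chain of inclusions and the first implication separately, since only the latter carries real content. For the inclusions, everything follows from the definition $\mathcal{H}_0^r(\xb)=\bigcap_{\zb\in B(\xb,r)}\mathcal{H}_0^0(\zb)$ by monotonicity of intersection: if $r<r'$ then $B(\xb,r)\subset B(\xb,r')$, so intersecting over the larger ball can only shrink the set, giving $\mathcal{H}_0^{r'}(\xb)\subseteq\mathcal{H}_0^{r}(\xb)$; and since $B(\xb,r')\subseteq\mathbb{R}^d$ we get $\mathcal{H}_0^{\infty}(\xb)=\bigcap_{\zb\in\mathbb{R}^d}\mathcal{H}_0^0(\zb)\subseteq\mathcal{H}_0^{r'}(\xb)$. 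The identity $\mathcal{H}_0^{\infty}=\mathcal{H}_0$ is then just unwinding definitions: $j\in\mathcal{H}_0^{\infty}(\xb)$ says $x_j\mapsto p(y|x_j,\zb_{-j})$ is constant for \emph{every} $\zb_{-j}$, which is exactly $X_j\independent Y|\Xb_{-j}$; the positivity $p(\xb,y)>0$ is what makes the conditional densities honest functions, so ``constant on every slice'' and ``constant almost everywhere'' agree. Taking complements yields $\mathcal{H}_1^{r}\subset\mathcal{H}_1^{r'}$.

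For the first implication I would fix $\xb$, set $S=\mathcal{H}_0^0(\xb)$, $T=[d]\setminus S$, assume $\zb_T=\xb_T$, and aim at $\mathcal{H}_0^0(\zb)=S$. The tool is the density reformulation recalled just before the proposition: $j\in\mathcal{H}_0^0(\wb)$ iff, with $\wb_{-j}$ frozen, $x_j\mapsto p(y|x_j,\wb_{-j})$ is constant. The key structural fact to establish is that near $\xb$ the conditional law of $Y$ given $\Xb$ depends on the conditioning vector only through its $T$-coordinates, i.e.
\[
p(y|\wb)=\varphi(y,\wb_T)\qquad\text{for some }\varphi\text{ and all }\wb\text{ in a neighborhood of }\xb .
\]
Granting this, both inclusions are immediate. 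If $j\in S$, then for any $\zb_{-j}$ with $\zb_T=\xb_T$ the quantity $p(y|x_j,\zb_{-j})$ depends on $(x_j,\zb_{-j})$ only through its $T$-coordinates $\zb_T=\xb_T$, which do not involve $x_j$, so it is constant in $x_j$ and $j\in\mathcal{H}_0^0(\zb)$. If $j\in T$, then $(x_j,\zb_{-j})$ and $(x_j,\xb_{-j})$ have the same $T$-coordinates (they agree in slot $j$ and, since $\zb_T=\xb_T$, in every other $T$-slot), so $p(y|x_j,\zb_{-j})=p(y|x_j,\xb_{-j})$; the latter is non-constant in $x_j$ because $j\notin\mathcal{H}_0^0(\xb)$, hence $j\notin\mathcal{H}_0^0(\zb)$. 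Combining, $\mathcal{H}_0^0(\zb)=S$, which is also exactly the remark preceding the proposition that the set of local nulls is unchanged under moving along the null features.

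The one real obstacle is the displayed structural fact. The definition of $S$ only hands us, for each single $j\in S$ in isolation, constancy of $p(y|\cdot)$ along the one coordinate line $\{\xb+te_j:t\in\mathbb{R}\}$, whereas I need simultaneous constancy in all $S$-directions over a full neighborhood. Closing this gap is where I expect all the work, and the positivity hypothesis, to be spent: with $p(\xb,y)>0$ the conditional densities are everywhere defined and depend continuously on the conditioning value, and I would try to use this to chain coordinate moves --- after perturbing the $j_1$-th coordinate, show that $j_2\in S$ is still a local null at the new point, then perturb $j_2$, and so on --- filling out a neighborhood of the slice by induction on $|S|$. I would expect the continuity afforded by positivity to be genuinely needed here, so pinning down exactly which regularity makes the chaining go through is, in my view, the crux of the proof.
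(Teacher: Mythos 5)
Your handling of the nested inclusions and of $\mathcal{H}_0^{\infty}=\mathcal{H}_0$ is correct and is exactly what the paper does: monotonicity of $\mathcal{H}_0^r(\xb)=\bigcap_{\zb\in B(\xb,r)}\mathcal{H}_0^0(\zb)$ in $r$ plus unwinding definitions. No issue there.

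For the first implication you have correctly located the crux, and you should know that the paper's own proof does not close the gap you flagged: it consists of the observation that a single null coordinate $j$ stays null when $x_j$ alone is varied (membership $j\in\mathcal{H}_0^0(\xb)$ depends only on $\xb_{-j}$), followed by the sentence that the first assertion is ``a consequence of this, extended to the whole set of nulls.'' That extension is precisely your chaining step, and the one-step move you would need --- after perturbing the $j_1$-th coordinate, $j_2\in S$ is still a local null at the new point --- is false; no regularity assumption repairs it. Take $d=2$, $\Xb\sim N(0,I_2)$, $Y=X_1X_2+\epsilon$ with $\epsilon\sim N(0,1)$ independent of $\Xb$: the joint density is positive and smooth, at $\xb=(0,0)$ both coordinates are local nulls (conditioning on $X_2=0$ makes $Y=\epsilon$ independent of $X_1$, and symmetrically), so $[d]\setminus\mathcal{H}_0^0(\xb)=\emptyset$ and the hypothesis $\xb_{\emptyset}=\zb_{\emptyset}$ holds vacuously for every $\zb$; yet at $\zb=(1,1)$ neither coordinate is a local null. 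A non-degenerate version: $d=3$, $Y=X_1X_2+X_3+\epsilon$, $\xb=(0,0,0)$ gives $\mathcal{H}_0^0(\xb)=\{1,2\}$, and $\zb=(1,1,0)$ agrees with $\xb$ on the non-null coordinate $3$ but has $1\notin\mathcal{H}_0^0(\zb)$, since $Y=X_1+\epsilon$ on the slice $X_2=1,X_3=0$. The underlying point is exactly the one you isolated: constancy of $p(y|\cdot)$ along each coordinate line $\{\xb+te_j\}$, $j\in S$, taken separately, does not imply the structural fact $p(y|\wb)=\varphi(y,\wb_T)$ on the affine slice $\{\wb:\wb_T=\xb_T\}$; pure interactions among the coordinates of $S$ defeat it. What survives is only the per-coordinate statement (if $j\in\mathcal{H}_0^0(\xb)$ and $\zb_{-j}=\xb_{-j}$ then $j\in\mathcal{H}_0^0(\zb)$); the proposition as stated needs an extra hypothesis that essentially presupposes the conclusion. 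Your instinct that all the real work --- and more than continuity --- was hiding in the chaining was right; neither your sketch nor the paper's proof can complete it.
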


\begin{proof}
If $j$ is a local null at $\xb$, then $p(y|\xb)$ does not depend on $x_j$ when fixing $\xb_{-j}$. Therefore $j$ is still a local null whenever we change its value provided the other coordinates are fixed. The first assertion is a consequence of this, extended to the whole set of nulls. The second assertion is immediate following the definition $\mathcal{H}_0^r(\xb) = \bigcap_{\zb \in B(\xb, r)}\mathcal{H}_0^0(\zb)$.
\end{proof}

\paragraph{Local False Discovery Rate} The output of a global feature selection procedure is a data-dependent set of selected features \mbox{$\hat{\mathcal{S}}\subset \{1,\dots,d\}$}. One popular statistical criterion to evaluate the performance of the procedure is the False Discovery Rate (FDR), which stands for the expected value of the proportion of false discoveries: $\text{FDR} = \mathbb{E}\Big[ \frac{|\hat{\mathcal{S}}\cap \mathcal{H}_0|}{|\hat{\mathcal{S}}|} \Big]$. The ratio $\frac{|\hat{\mathcal{S}}\cap \mathcal{H}_0|}{|\hat{\mathcal{S}}|}$ is also called False Discovery Proportion (FDP). Given a prescribed target $q\in (0,1)$, we want to guarantee that our procedure's FDR is upper bounded by $q$. If we now switch to a local/conditional perspective, we are now looking for procedures that construct mappings $\xb \mapsto \hat{\mathcal{S}}(\xb)$ where we want $\hat{\mathcal{S}}(\xb)$ to match the set of local nulls at $\xb$. The criterion to evaluate these new selection mapping procedures should also be point-dependent. We can now accordingly extend the definition of FDP and FDR to the local setting.

\begin{definition}
For $r>0$, and a given feature selection procedure that for $\xb \in \mathbb{R}^d$ outputs a data-dependent set of selected features $\hat{\mathcal{S}}(\xb)$, we define the $r$-local False Discovery Proportion ($\text{FDP}^r$) and $r$-local False Discovery Rate ($\text{FDR}^r$) as the following functions of $\xb \in \mathbb{R}^d$:
\[
\text{FDP}^r\!(\xb)\! =\! \frac{|\hat{\mathcal{S}}(\xb)\cap \mathcal{H}_0^r(\xb)|}{1\vee | \hat{\mathcal{S}}(\xb)|} \quad \text{FDR}^r\!(\xb)\! =\! \mathbb{E}[\text{FDP}^r(\xb)]
\]
\end{definition}
The randomness in the $\text{FDP}^r$ expression comes from the fact that the construction of $\hat{\mathcal{S}}(\xb)$ depends on the random dataset of observations $(\mathbb{X},\mathbb{Y})$. The motivation for such criterion is the following: in a global feature selection procedure with FDR control, we want to construct $\hat{\mathcal{S}}$ as close as possible from $\mathcal{H}_1^{\infty}$, while penalizing whenever $j \in \hat{\mathcal{S}}\cap \mathcal{H}_0^{\infty}$. If we want to have a more granular approximation of the local non-nulls at $\xb$ by constructing $\hat{\mathcal{S}}(\xb)$, we need to penalize whenever $j \in \hat{\mathcal{S}}(\xb)\cap \mathcal{H}_0^r(\xb)$. Indeed, given that $\mathcal{H}_0^{\infty} \subset \mathcal{H}_0^{r}$ by Proposition~\ref{prop:inclusion}, if we were to construct local selection sets using the global FDR we wouldn't penalize the fact that in a neighborhood of $\xb$ a feature $j$ is locally null even though it is non-null at a far away part of the feature space. A new issue that arises from this statistical objective is that we no longer need to control for a single real value, but for a function $\mathbb{R}^d \rightarrow [0,1]$. From now on, we will focus on controlling point-wise $\text{FDR}^r$ at a finite number of distinct points far apart, in an independent manner. A future research direction will consist in defining a more comprehensive FDR objective across selection sets at several points in the space.

The strength of the local procedure that we now present is that the validity of the statistical guarantees on the local FDR is as strong as for the global procedure. The ability of making true local discoveries at a given point will be limited by the availability of data in the neighborhood of such point. However the rate of false positives will still be controlled.

\section{Controlling Local FDR Through Knockoffs with Local Importance Scores}

We now present the global knockoff procedure, and then define the generalization to a local setting through an extension of the importance scores. Our procedure requires the analyst to provide information on the choice of the partitioning of the feature space. We end this section by providing two algorithms based on heuristics to automate this choice.

\subsection{The Knockoff Procedure}

Traditionally feature selection is based on a parametric model we assume captures the relationship between $\Xb$ and $Y$. By fitting such model to the available data, with the appropriate statistical assumptions we can assess the probability of each parameter to be different from 0, and use it as a proxy for the likelihood that the corresponding feature is relevant for predicting $Y$ \cite{lippert2011fast, miller2002subset, tibshirani1996regression}. These assumptions on the distribution of $Y|\Xb$ are often inadequate, as the complexity of the distribution of the response given the features is seldom limited to a linear regression (or even to other more complex models such as generalized linear models). This model mismatch can break down the statistical guarantees of the procedure, or drastically reduce its ability to find relevant features \cite{friedman2001elements}. Furthermore, in high-dimensional settings the classic distributional results upon which p-values are constructed (the summary value for assessing the statistical relevance of a feature) are no longer valid, so the statistical guarantees for any selection based on those methods may completely break down \cite{sur2018modern}. 

The knockoff procedure is a two-step procedure that has recently emerged which avoids the pitfalls of the model-based global feature selection procedures by shifting the bulk of the statistical hypothesis from modeling $Y|\Xb$ to modeling the distribution $P^{\Xb}$ of $\Xb$ \cite{KN1, KN2}. False Discovery Rate control (i.e. control of the number of false positives in expectation) is guaranteed during the first step of the procedure, by assuming only knowledge of $P^{\Xb}$: during this step the analyst does not make any assumption on the behavior of $Y|\Xb$, and still FDR is controlled no matter which choices are made downstream. This first step constructs what are called knockoff variables $\tilde{\Xb}$ that are used as controls when running the second step of the procedure. That second step aims at getting a high number of true selections (referred to as the power of the procedure), which requires the analyst to fit some appropriate procedure to the joint data $(\Xb, Y)$ among a large pool of options. To summarize, the knockoff procedure decouples the assumptions underlying the statistical guarantee (knowledge of $P^{\Xb}$ to generate $\tilde{\Xb}$) from the assumptions needed to have a high power (fitting a model for $Y|\Xb$). 

We now introduce some notation for a more formal presentation of the knockoff procedure. For $\xb, \tilde{\xb} \in \mathbb{R}^d$, we define the concatenated vector by $[\xb, \tilde{\xb}] \in \mathbb{R}^{2d}$, and the following swap operation: given a subset $S \subset [d]$ of indices, the vector $[\xb,\tilde{\xb}]_{swap(S)}\in \mathbb{R}^d \times \mathbb{R}^d$ corresponds to $[\xb,\tilde{\xb}]$ where the coordinates indexed by $S$ are swapped between $\xb$ and $\tilde{\xb}$ (we use the same notation whenever the vectors are stacked in a matrix). Also, for any subset of indices $S\subset [d]$ we denote by $\xb_{S} \in \mathbb{R}^{|S|}$ the restriction of $\xb$ to coordinates in $S$. For a set $A$, denote $\#A$ its size.

Assuming we know the ground truth for the distribution $P^X$, the first step of the knockoff procedure is to obtain a \emph{knockoff} sample $\tilde{\Xb}$ that satisfies the following conditions \cite{KN2}: 

\begin{definition}[Knockoff sample]
A knockoff sample $\tilde{\Xb}$ of a $d$-dimensional random variable $\Xb$ is a $d$-dimensional random variable such that two properties are satisfied:
\vspace{-3mm}
\begin{itemize}
\setlength\itemsep{-0.2em}
\item Conditional independence: \hspace{3mm}$\tilde{\Xb} \independent Y | \Xb$ 
\item Exchangeability :  
\vspace{-2mm}
\[[\Xb,\tilde{\Xb}]_{swap(S)}\; \stackrel{d}{=}\; [\Xb,\tilde{\Xb}]\qquad  \forall S \subset \{1,\dots,d\}
\]
\end{itemize}
\vspace{-2mm}
where $\stackrel{d}{=}$ stands for equality in distribution.
\end{definition}

The first condition is immediately satisfied as long as knockoffs are sampled conditioned on the sample $\Xb$ without considering any information about $Y$. Satisfying the exchangeability condition is more technical and has recently received widespread attention \cite{KN3, jordon2018knockoffgan, gimenez2018knockoffs, liu2018auto, romano2018deep}. Indeed satisfying the exchangeability condition is the cornerstone of the knockoff procedure upon which the FDR control relies, thus the importance of having accurate mechanisms to generate valid knockoffs.

The next step requires the analyst to apply some procedure on the dataset $[\mathbb{X}, \tilde{\mathbb{X}}],\mathbb{Y}$ such that it constructs \emph{importance scores} $[\Tb, \tilde{\Tb}] \in \mathbb{R}_+^d \times \mathbb{R}_+^d$: a large value for $T_j$ (or $\tilde{T}_j$) indicates that the procedure considers that the feature $X_j$ (or $\tilde{X}_j$) has a strong influence on $Y$. Notice that here we consider the full dataset of i.i.d. samples $\mathbb{X}$, and the corresponding dataset of knockoff variables $\tilde{\mathbb{X}}$, that are generated sample-wise from $\mathbb{X}$. The importance scores are random variables constructed based on the whole set of samples. Some examples consist in taking the absolute values of the coefficients in a linear or logistic regression, the time of first entry of a feature in a LASSO path in a $L^1$-regularized regression, the drops in accuracy in a trained classifier by perturbing one feature at a time \cite{KN1, KN2, gimenez2018knockoffs, lu2018deeppink}. The only crucial requirement on such importance scores to be valid (for the knockoff procedure) is that they are \emph{associated} to their corresponding feature, and agnostic to the ordering of the remaining features. That is, for any $S\subset [d]$, if the analyst feeds the procedure with the dataset $[\mathbb{X}, \tilde{\mathbb{X}}]_{swap(S)},\mathbb{Y}$, then the resulting importance scores are $[\Tb, \tilde{\Tb}]_{swap(S)}$. 

The strength and flexibility of the knockoff procedure is that there is no need to assume any model on the joint distribution of $(\Xb, Y)$, and that we are free to choose any method that generates valid importance scores: the FDR control will hold because the knockoff importance scores $\tilde{T}_j$ will serve as controls for the $T_j$. For a target FDR level $q\in (0,1)$, the output of the procedure is the set $\hat{\mathcal{S}} = \{j : (T_j - \tilde{T}_j) \geq \hat{\tau}\}$ based on a data-driven threshold
\begin{equation}\label{tau-equation}
\hat{\tau} = \min \Big\{ t > 0 : \frac{1+ \#\{j: (T_j - \tilde{T}_j) \leq -t\}}{\#\{j: (T_j - \tilde{T}_j)\geq t\}} \leq q\Big\}
\end{equation}
The $j$th feature is selected when the difference between the importance scores of $X_j$ and $\tilde{X}_j$ is larger than $\hat{\tau}$.

\paragraph{Why subsetting the data first does not work} The first naive way to extend the knockoff procedure to a local setting is to run the whole procedure after restricting the initial dataset. For $\xb \in \mathbb{R}^d$, if we are interested in identifying the set of local non-nulls $\mathcal{H}_1^r(\xb)$, we could run the knockoff procedure starting with the set \mbox{$\{(\Xb_i, Y_i), \, s.t. \Vert \Xb_i -\xb \Vert_{\infty} \leq r\}$}. The reason why this approach is difficult to implement is that generating $\tilde{\Xb}$ for the conditional distribution of $\Xb$ being in a neighborhood of $\xb$ is generally not tractable. Restricting the amount of data harms how well we can fit our model of $P^{\Xb}$ to the data. Moreover conditioning on the values of $\Xb$ requires a different process for sampling the knockoffs $\tilde{\Xb}|\Xb$ than what is currently available \cite{KN1, KN2, gimenez2018knockoffs, lu2018deeppink}. In order to construct local selection sets at a given point $\xb$, we need to keep the original knockoff generation process intact, but construct local importance scores. Keeping the statistical guarantee in place requires reformulating the whole knockoff procedure in a local approach.

\subsection{Knockoff with Local Importance Scores}

Our extension of the knockoff procedure assumes that we have valid knockoffs available for the whole dataset: that is, the first step of constructing knockoffs is exactly the same as in the global knockoff procedure. By using the whole available dataset, by making the same assumptions as in the global knockoff setting, we can reuse all the available tools already developed for building valid knockoffs, and will not leave out any useful information which could hurt the validity of the statistical guarantees.

\begin{algorithm}[tb]
   \caption{Knockoffs with Local Importance Scores Feature Selection Procedure}
   \label{alg:LocalKnockoff}
\begin{algorithmic}
   \STATE {\bfseries Input:} Dataset $\mathbb{X}, \mathbb{Y}$, set of $L$ points $\mathcal{L}=\{\zb_l\}_{1\leq l\leq L}$, radius $r>0$.
   \STATE Generate dataset of knockoff variables $\tilde{\mathbb{X}}$.
   \FOR{$l=1$ {\bfseries to} $L$}
   \STATE Subsample data $(\mathbb{X}^{(l)}, \tilde{\mathbb{X}}^{(l)}, \mathbb{Y}^{(l)}) = \{(\Xb_{\cdot}, \tilde{\Xb}_{\cdot}, Y_{\cdot})$ such that \mbox{$\Vert \Xb_{\cdot}- \zb_l\Vert_{\infty} \leq r \; \& \;  \Vert \tilde{\Xb}_{\cdot} -\zb_l\Vert_{\infty} \leq r \}$}
   \STATE Generate ($r$-local) importance scores $\Tb^{(l)}, \tilde{\Tb}^{(l)}$ from $(\mathbb{X}^{(l)}, \tilde{\mathbb{X}}^{(l)}, \mathbb{Y}^{(l)})$
   \STATE Construct $\hat{\tau}^{(l)}$ as in equation~\ref{tau-equation} from $\Tb^{(l)}, \tilde{\Tb}^{(l)}$
   \STATE Construct $\hat{\mathcal{S}}_{l} = \{j \in [d]: \Tb^{(l)}_j - \tilde{\Tb}^{(l)}_j \geq \hat{\tau}^{(l)}\}$
   \ENDFOR
   \STATE {\bfseries Return:} Selected sets $(\Hat{\mathcal{S}}_{l})_{1\leq l \leq L}$
\end{algorithmic}
\end{algorithm}

Fix $r>0$, consider a set of $L$ points $\zb_1, \dots, \zb_L$ whose pairwise distances are lower bounded by $2r$. We now present Algorithm~\ref{alg:LocalKnockoff}, a procedure that constructs $L$ sets $\hat{\mathcal{S}}_1, \dots , \hat{\mathcal{S}}_L \subset [d]$ of selected features at each point $\zb_l$, such that we control $\text{FDR}^r$ at any desired FDR target level $q\in (0,1)$, uniformly for all sets. The crucial aspect of this procedure is that we are able to make a local statement for local FDR uniformly for several points. This algorithm starts by constructing knockoffs $\tilde{\mathbb{X}}$ from $\mathbb{X}$ as in the global knockoff procedure. It then subsets the whole dataset into $L$ datasets $(\mathbb{X}^{(l)}, \tilde{\mathbb{X}}^{(l)}, \mathbb{Y}^{(l)})$ where for each $1\leq l \leq L$ the samples in the $l$th subset are such that the original vector $\Xb_{\cdot}^{(l)}$ and the knockoff sample $\tilde{\Xb}_{\cdot}^{(l)}$ are in a $r$-ball neighborhood of $\zb_l$ (we denote by $\Xb_{\cdot}$ a generic sample of the vector $\Xb$ among the samples in $\mathbb{X}^{(l)}$). We now run construct $r$-local importance scores independently for each subset, that we use to construct a selected set of features as in the global knockoff procedure. As for the global knockoff procedure, a wise choice of importance scores may lead to a higher power of the procedure, i.e. a correct modelling of the relationship between the original features and the response for the dataset $(\mathbb{X}^{(l)}, \tilde{\mathbb{X}}^{(l)}, \mathbb{Y}^{(l)})$. As long as the importance scores satisfy the association requirement stated above, one can use different methods for each $l$: we can use the absolute coefficients when fitting a logistic regression \cite{KN2} for one of the subsets if we know it is a good model for that subset, and a more generic accuracy drop \cite{gimenez2018knockoffs} for another region where we may lack such knowledge.

\begin{theorem}\label{thm:guarantee}
The output of Algorithm~\ref{alg:LocalKnockoff} is such that $\text{FDR}^r$ is controlled uniformly for $\zb_l$. That is, denoting for each $1\leq l \leq L$,
\[
\text{FDP}^r(\zb_l) = \frac{|\hat{\mathcal{S}}_l \cap \mathcal{H}_0^r(\zb_l)|}{1\vee |\hat{\mathcal{S}}_l|}
\]
we have that:
\[
\sup_l \mathbb{E}[\text{FDP}^r(\zb_l)] \leq q
\]
\end{theorem}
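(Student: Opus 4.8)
The plan is to reduce the claim to $L$ separate instances of the ordinary knockoff FDR guarantee, one per point $\zb_l$. Since the assertion is $\sup_l \mathbb{E}[\text{FDP}^r(\zb_l)] \le q$, with the expectation taken \emph{inside} the supremum, it suffices to show $\mathbb{E}[\text{FDP}^r(\zb_l)] \le q$ for each fixed $l$; the $2r$-separation of the $\zb_l$ plays no role in this weak form of uniformity (it would only matter for joint statements across the $L$ points). Fix $l$ and let $E_l = \{\Xb \in B(\zb_l,r),\ \tilde{\Xb} \in B(\zb_l,r)\}$ denote the per-sample event defining the $l$-th subsample.

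The heart of the argument is a local exchangeability statement: \emph{conditionally on $E_l$, the joint law of $([\Xb,\tilde{\Xb}],Y)$ is invariant under $\mathrm{swap}(S)$ for every $S\subseteq \mathcal{H}_0^r(\zb_l)$.} I would prove this from two ingredients. First, since $B(\zb_l,r)$ is a product of intervals, the indicator $\mathbf{1}_{E_l}$, regarded as a function of $[\Xb,\tilde{\Xb}]$, is itself invariant under $\mathrm{swap}(S)$ for any $S\subseteq[d]$ --- this is precisely why the algorithm conditions on \emph{both} $\Xb$ and $\tilde{\Xb}$ lying in the ball rather than on $\Xb$ alone --- so, combined with the exchangeability of the knockoffs, $[\Xb,\tilde{\Xb}]\mid E_l$ is $\mathrm{swap}(S)$-invariant. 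Second, conditioning further on $[\Xb,\tilde{\Xb}]$, the property $\tilde{\Xb}\independent Y\mid \Xb$ means $Y$ has density $p(y\mid\Xb)$; on $E_l$ we have $\Xb\in B(\zb_l,r)$, hence $S\subseteq \mathcal{H}_0^r(\zb_l)\subseteq \mathcal{H}_0^0(\Xb)$, and so, changing the coordinates of $\Xb$ indexed by $S$ one at a time and invoking the characterization of local nulls together with Proposition~\ref{prop:inclusion} (which keeps every coordinate of $S$ a local null after each single change), $p(y\mid\Xb)$ does not depend on $\Xb_S$. Thus replacing $\Xb_S$ by $\tilde{\Xb}_S$ changes neither the joint density of $(\Xb,\tilde{\Xb})$, nor $\mathbf{1}_{E_l}$, nor $p(y\mid\Xb)$, which gives the claimed invariance of $([\Xb,\tilde{\Xb}],Y)\mid E_l$.

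With this in hand, I would condition on $N_l := \#\mathbb{X}^{(l)} = m$ (the statement above is about the \emph{conditional} law given $E_l$ and fails for the full dataset, where a sample outside the ball can have $Y$ depending on $\Xb_S$). Given $N_l = m$, the rows of $(\mathbb{X}^{(l)},\tilde{\mathbb{X}}^{(l)},\mathbb{Y}^{(l)})$ are i.i.d.\ draws from $\mathcal{L}\big(([\Xb,\tilde{\Xb}],Y)\mid E_l\big)$, so the matrix $[\mathbb{X}^{(l)},\tilde{\mathbb{X}}^{(l)}]$, jointly with $\mathbb{Y}^{(l)}$, is invariant in distribution under simultaneously swapping any set of columns $S\subseteq \mathcal{H}_0^r(\zb_l)$. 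Because the $r$-local importance scores are required to be swap-equivariant (the association property), $[\Tb^{(l)},\tilde{\Tb}^{(l)}]_{\mathrm{swap}(S)}\stackrel{d}{=}[\Tb^{(l)},\tilde{\Tb}^{(l)}]$ for all such $S$; equivalently, writing $W_j^{(l)} = \Tb^{(l)}_j - \tilde{\Tb}^{(l)}_j$, conditionally on $(|W_j^{(l)}|)_{j\in[d]}$ and on $(\mathrm{sign}\,W_j^{(l)})_{j\notin \mathcal{H}_0^r(\zb_l)}$ the signs $(\mathrm{sign}\,W_j^{(l)})_{j\in \mathcal{H}_0^r(\zb_l)}$ are i.i.d.\ uniform on $\{\pm 1\}$. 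This is exactly the hypothesis under which the data-driven threshold $\hat{\tau}^{(l)}$ of equation~\ref{tau-equation} controls the false discovery rate: the standard super-martingale argument underlying the knockoff filter, applied with null set $\mathcal{H}_0^r(\zb_l)$, gives $\mathbb{E}[\text{FDP}^r(\zb_l)\mid N_l = m]\le q$ for every $m\ge 0$ (for $m=0$ trivially, since then $\hat{\mathcal{S}}_l=\emptyset$ and $\text{FDP}^r(\zb_l)=0$). Averaging over $N_l$ yields $\mathbb{E}[\text{FDP}^r(\zb_l)]\le q$, and since this holds for every $l$, the supremum is also $\le q$.

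I expect the main obstacle to be the local exchangeability statement of the second paragraph --- in particular, verifying that \emph{restricting} to the subsampling event preserves the right swap-invariance. The two ingredients are individually elementary, but the proof breaks if the subsampling event is not chosen exactly as in the algorithm (conditioning on $\Xb\in B(\zb_l,r)$ alone destroys the invariance), and it is essential that the relevant null set be $\mathcal{H}_0^r(\zb_l)$ rather than the pointwise set $\mathcal{H}_0^0(\zb_l)$, so that the local-null property is available at \emph{every} value the conditioned $\Xb$ can take inside the ball. Once that fact is established, the remainder is a verbatim re-run of the global knockoff FDR proof applied separately to each subsample, and the ``uniform over $l$'' conclusion is immediate.
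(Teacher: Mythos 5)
Your proof is correct, and it reaches the conclusion by a genuinely different and more elementary route than the paper. The paper never conditions on the subsampling event; instead it builds a calculus of location-dependent swaps $\sigma:\mathbb{R}^d\to\mathcal{P}([d])$, proves a ``local exchangeability'' statement $[\Xb,\tilde{\Xb}]_{swap(\sigma)},Y\stackrel{d}{=}[\Xb,\tilde{\Xb}],Y$ for $\sigma\subset\mathcal{H}_0^0$ (Proposition~\ref{extended-exchangeability}, via decomposing $\sigma$ into singleton swaps and conditioning on $\Xb_{-j},\tilde{\Xb}_{-j}$), formalizes importance scores as swap-equivariant functionals of the measure $\hat{\mu}_n$ whose value at $\zb$ depends only on the restriction of $\hat{\mu}_n$ to $B(\zb,r)\times B(\zb,r)\times\mathbb{R}$ (this $r$-locality is precisely your ``scores computed from the subsample''), and then passes the swap through the scores via the truncated swap $\sigma^r$ (Propositions~\ref{swap-scores} and~\ref{sigma-r}) to obtain the flip-sign property \emph{jointly} at all $L$ points. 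You instead fix one $\zb_l$, condition on the per-sample event $E_l$ and on $N_l=m$, and reduce directly to the classical fixed-set exchangeability of \cite{KN2} with null set $\mathcal{H}_0^r(\zb_l)$. The two arguments hinge on the same two facts, which you identify correctly: the subsampling event is symmetric under coordinate swaps between $\Xb$ and $\tilde{\Xb}$ (hence the requirement that \emph{both} lie in the ball), and $p(y\mid\xb)$ is unchanged when coordinates in $\mathcal{H}_0^r(\zb_l)$ are replaced one at a time with all intermediate points remaining in the ball --- the latter being exactly what the paper's Lemmas~\ref{swap-decomposition} and~\ref{swap-iteration} encode. What the paper's heavier machinery buys is the simultaneous distributional identity of Proposition~\ref{sigma-r} across all $L$ points, which could support joint guarantees over the partition; for the theorem as stated, which only bounds $\sup_l\mathbb{E}[\text{FDP}^r(\zb_l)]$, your observation that the $2r$-separation is not needed and that a per-point argument suffices is accurate, and your conditioning-on-$N_l$ step (with the trivial $m=0$ case) is a detail the paper glosses over.
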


In order to prove this result we need to show that the usual knockoffs can be reused in a local way: this requires to redefine and extend the distributional properties underlying the proof of FDR control from the global knockoff procedure as in \cite{KN2}. We present the technical tools needed to prove this theorem in Section~\ref{section:technical}.

\subsection{Algorithms for Partitioning the Feature Space into Meaningful Sub-regions}

We now present the last element needed to obtain a reasonable local selection procedure. The implementation of Algorithm~\ref{alg:LocalKnockoff} requires the analyst to indicate the points at which the selection sets need to be computed. Such knowledge may not be available: it even requires to identify some of the non-nulls that are needed to partition the space into sub-regions with different sets of non-nulls. We now present an approach to hierarchically identify relevant "switch" features for the purpose of partitioning the feature space and the values that these switch features take at the boundaries of the partitions, and a second one in Appendix~\ref{appendix:partition}. These algorithms based on heuristics only provide input values to Algorithm~\ref{alg:LocalKnockoff}. As such, the statistical guarantees described in Theorem~\ref{thm:guarantee} always hold.

\paragraph{Importance Score-based Partitioning}

Finding the best "switch" feature and its boundary value can be done in a greedy manner. The output will be relevant depending on the choice of importance scores that we use as a subroutine of this algorithm. Fix a subset $I \subset [d]$ of features, fix a set $\Lambda = (\lambda_1 , \dots, \lambda_L) \in \mathbb{R}^L$ of $L$ boundary values. For each pair $(c,\lambda)$ of feature and split value, we split the full matrix dataset $(\mathbb{X}, \mathbb{Y})$ into two sets $(\mathbb{X}^{>,c\lambda}, \mathbb{Y}^{>,c\lambda})$ and $(\mathbb{X}^{<,c\lambda}, \mathbb{Y}^{<,c\lambda})$ based on the value of the $d$th covariate of each sample $\Xb_i$ and compute importance scores $\Tb^{>,c\lambda}$ and $\Tb^{<,c\lambda}$ independently on each (notice that importance scores do not require the presence of knockoffs: moreover, importance scores are constructed based on a set of features agnostic to whether those features were original or knockoff). As a method to generate importance scores, we will train a classifier on the dataset and compute, for each feature, the accuracy drops when reevaluating the trained classifier on the same initial data except for one feature that we shuffle across samples. We then select the optimal split as the one with highest gap in importance scores in $L^{\infty}$ norm between the two splits. We can then recursively, depending on the amount of data, run the algorithm on each of the splits, or, if it corresponds to the final step, return a set of vectors lying in distinct subregions, spaced by at least $2r$.

\begin{algorithm}[tb]
   \caption{One-Step Greedy Feature Space Partition}
   \label{alg:greedypartition}
\begin{algorithmic}
   \STATE {\bfseries Input:} Subset of indices I, Split values $\Lambda$, Dataset $\{\mathbb{X}, \mathbb{Y}\}$, radius r.
   \FOR{$d$ {\bfseries in} I}
   \FOR{$\lambda$ {\bfseries in} $\Lambda$}
   \STATE Partition the data $(\mathbb{X}, \mathbb{Y})$ according to $X_{\cdot d} > \lambda$ or $X_{\cdot d} < \lambda$ into $(\mathbb{X}^{>,c\lambda}, \mathbb{Y}^{>,c\lambda})$ and $(\mathbb{X}^{<,c\lambda}, \mathbb{Y}^{<,c\lambda})$.
   \STATE Generate importance scores $\Tb^{>,c\lambda}$ and $\Tb^{<,c\lambda}$.
   \STATE Compute $gap_{d, \lambda} = \Vert\Tb^{>,c\lambda} - \Tb^{<,c\lambda} \Vert_{\infty}$
   \ENDFOR
   \ENDFOR
   \STATE Select $d^*, \lambda^* = \argmax gap_{d,\lambda}$.
   \IF{Final recursive step}
   \STATE Return $\zb_{<}, \zb_{>}$ s.t. $\zb_{<,d^*}\!  \leq\! \lambda^*\! -\! r$ and \mbox{$\zb_{>,d^*} \!\geq\! \lambda^* \!+\! r$}.
   \ELSE
   \STATE Return $(\mathbb{X}^{>,c\lambda}, \mathbb{Y}^{>,c\lambda})$ and \mbox{$(\mathbb{X}^{<,c\lambda}, \mathbb{Y}^{<,c\lambda})$}.
   \ENDIF
\end{algorithmic}
\end{algorithm}

The main disadvantage of this method is the computational cost. Indeed, for a given subset of relevant features, identifying the optimal choice for the switch feature and the boundary value is quadratic in the size of the subset. As such we may want to choose an appropriate subset $I$ and set $\Lambda$ as inputs for Algorithm~\ref{alg:greedypartition} (any global feature selection procedure can be used to identify an appropriate set $I$). Given that we need to repeatedly fit the model, this method may become prohibitive whenever the number of relevant features is too high (i.e. non sparse settings or high-dimensional settings). We present an alternative approach to overcome computational bottlenecks in Appendix~\ref{appendix:partition}.

\section{Experiments}

We now run experiments and the first goal is to show that our main theorem holds. Among the many available distributions for $P^{\Xb}$ that allow for efficient sampling of knockoffs, we choose the Hidden Markov Model (HMM) \cite{KN3} to illustrate a GWAS study. We therefore sample the datasets $\mathbb{X},\tilde{\mathbb{X}}$ so that $\mathbb{X}$ simulates the SNPs of a cohort of patients, i.e. a matrix of 0, 1, and 2. We generate a complex response $Y$ in the following way: we randomly pick a subset of $q$ features $\mathcal{S}_0 = \{s_1,\dots,s_q\}$ that correspond to the global non-nulls. However, we design the response in such way that not all global non-nulls are local non-nulls at every point. Our design of the response begins by choosing switch features: based on the value of $X_{s_1}$ (greater or lower than 1.5) we select the first or second half of the remaining global non-null indices $\{s_2,\dots,s_q\}$, and we repeat one more time this process (i.e. for example whenever $X_{s_1}>1.5$ look at whether $X_{s_2}>1.5$, in which case we end up picking the first half of $\{s_3,\dots, s_{q/2}\}$). We end up, for each individual sample $\Xb_i = (X_{i,1},\dots,X_{i,d})$ in the dataset $\mathbb{X}$, with a subset of indices $\mathcal{S}_i\subset \mathcal{S}_0 \subset [d]$ (in the case where $X_{i,s_1}, X_{i,s_2} >1.5$, we have then $\mathcal{S}_i = \{s_3,\dots,s_{q/4}\}$). We then generate $Y_i$ as a Bernoulli whose probability is given by a linear combination of the values $\Xb_{i,\mathcal{S}_i}$ (i.e. can be modeled through a logistic regression on that subset of the space). This way, the set of local non-nulls varies depending on the values of the switch features.

\begin{figure}[ht]
\centering
\includegraphics[width=\linewidth]{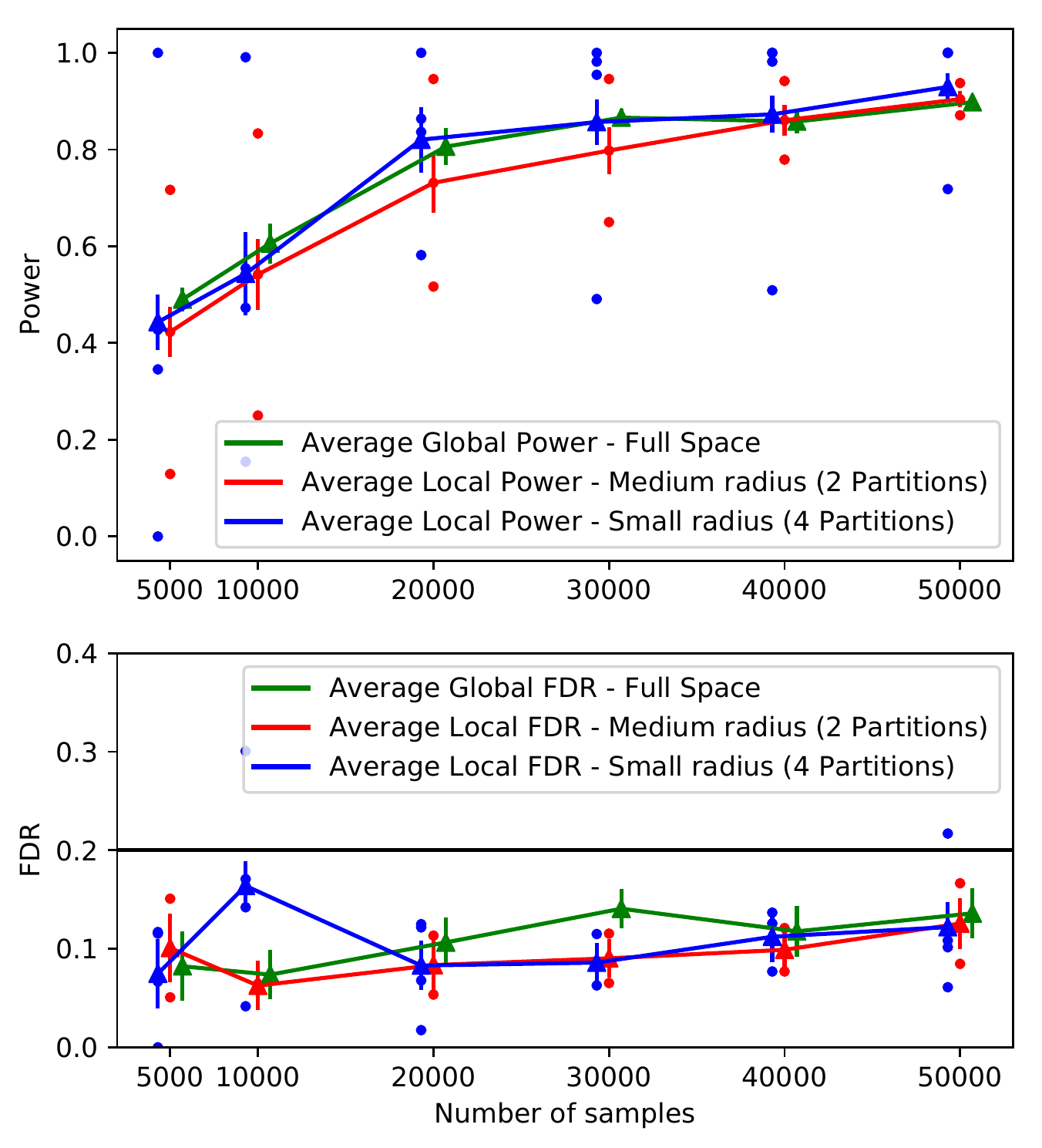}
\vspace{-0.2cm}
\caption{\textbf{Local Power and Local FDR as a Function of Sample size.} For several partitions, the dots indicate the (averaged over 20 runs) local power (top) / local FDR (bottom) at each of the partitions (1, 2 or 4). The lines correspond to the averages across partitions and 20 runs, with estimates of the standard errors. We report the target FDR at 0.2 with a horizontal line.}
\label{fig:1}
\end{figure}

We now run Algorithm~\ref{alg:LocalKnockoff} at three "resolution levels", i.e. for $L =1, 2$ and $4$ points $(\zb_l)_{1\leq l \leq L}$ (so that we respectively construct 1,2,4 selection sets $(\Hat{\mathcal{S}}_l)_{1\leq l \leq L}$), using each time an optimal choice of points that we assume are given and correspond to the optimal partitions in our response generating process. Our target FDR is $q=0.2$ and locally we consider as importance scores the absolute values of the coefficients in a logistic regression. We report the results in Figure~\ref{fig:1}, where the dots refer (for each resolution level) to the local FDR/power at each individual set of the partition. It is crucial to notice that here the FDR/Power scale is related to the procedure: as we increase the partition we consider local FDR/Power on a smaller scale: at the highest resolution, i.e. $L=4$, for the point $\zb_l$ corresponding to the region $X_{s_1}, X_{s_2} >1.5$, we will evaluate the FDR and power of the output $\Hat{\mathcal{S}}_l$ of the procedure locally with respect to $\mathcal{S}_{0l} = \{s_3,\dots,s_{q/4}\}$. Our procedure is able, in every setting, \emph{to retrieve the correct subset of non-nulls at the corresponding scale while controlling for FDR at the corresponding scale}. That is, the run with just 1 point corresponds to the usual global feature selection problem and selects features that are global nulls. Whenever we choose 2 or 4 points, we output several selection sets that correspond to the local nulls on the corresponding regions.

However to appreciate the real interest of the local feature selection problem one needs to consider what happens to the features that are non-null in a subspace but null in another (for example $X_{s_3}$ is non-null only whenever $X_{s_1}, X_{s_2} > 1.5$). The fact that our procedure controls local FDR indicates that, at a given subspace, it is properly identifying the true local non-nulls, and \emph{not selecting features that are null in that subspace, albeit non-null somewhere else}. Figure~\ref{fig:2} illustrates this phenomenon: we run again the procedure (for different values of $L$) but report the averaged local FDR across the four partitions whenever $L=4$. This indicates that, if we were only interested in a phenomenon in a subregion of the feature space, the output of the global feature selection procedure would report a large proportion of features that are actually irrelevant in the corresponding subregion.

\begin{figure}[ht]
\centering
\includegraphics[width=\linewidth]{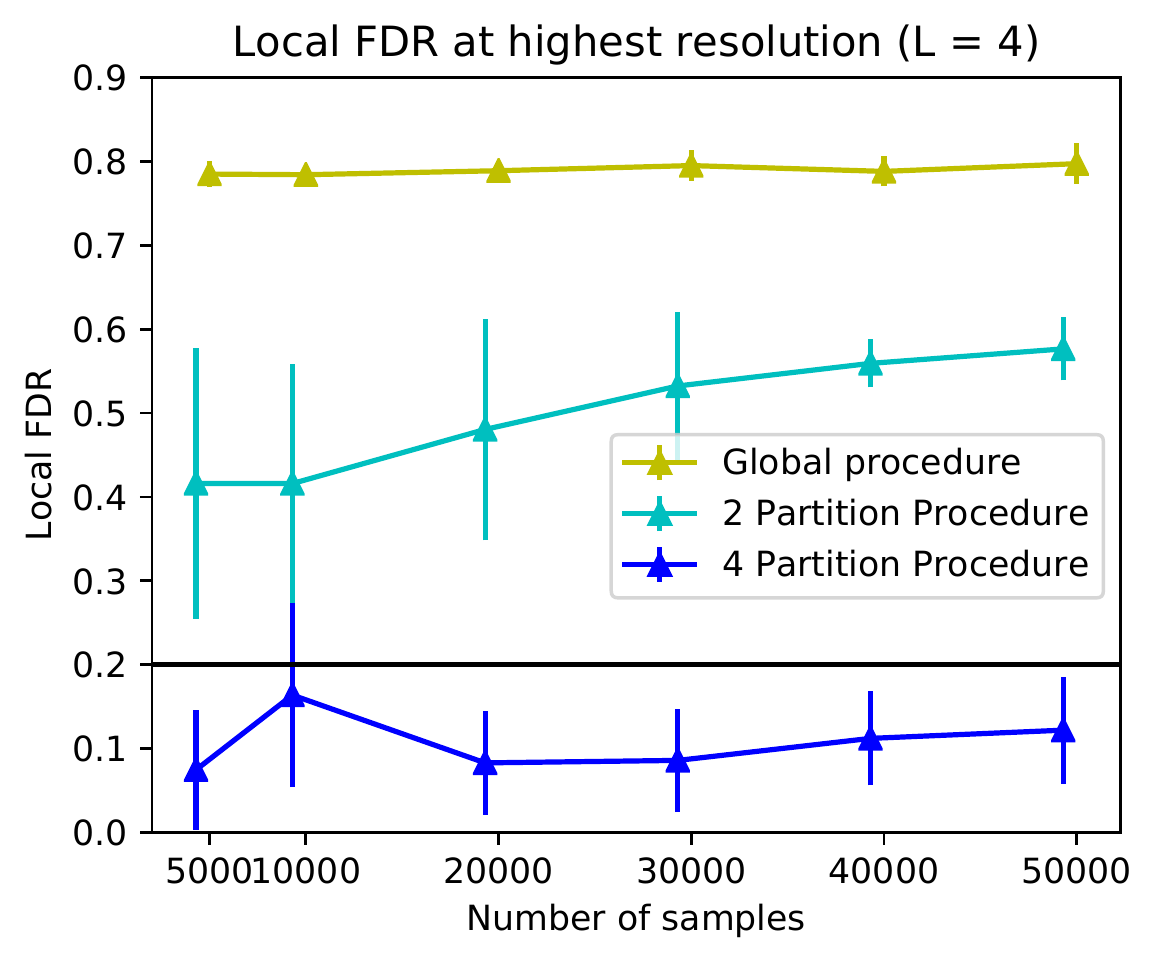}
\vspace{-0.2cm}
\caption{\textbf{Local FDR at highest resolution for procedures at different resolutions.} Whenever we evaluate the results of a low-resolution procedure (global feature selection) with a high resolution FDR (small radius with 4 partitions), the coarser methods can not control the finer FDR.}
\label{fig:2}
\end{figure}

Notice also that our simulations were done in dimension 200, and we vary the size of the dataset to show that our method still controls local FDR even though the number of points in a given subregion is very limited: we indeed have lower power in these settings, but our statistical guarantees always hold.

\section{Theoretical Analysis of Local Feature FDR}\label{section:technical}

Our main result presented in Theorem~\ref{thm:guarantee} holds regardless of the method used to partition the space. It is based upon a generalization of several notions defined in the first introductions of the knockoff procedure \cite{KN1,KN2} which we now present.

\paragraph{Local swaps and Local Exchangeability}  

We extend the definition of the swap to mappings from $ \mathbb{R}^d \times \mathbb{R}^d$ to $ \mathbb{R}^d \times \mathbb{R}^d$, where the set of indices $S$ used to swap depends on the inputs of the mapping. 

\begin{definition}\label{def-knockoff-swap}
We call a mapping $\sigma : \mathbb{R}^d \rightarrow \mathcal{P}([d])$ a generic swap or a swap. In addition, we say that a swap is a local swap if for any $\xb,\zb \in \mathbb{R}^d$, 
\[\xb_{[d]\setminus \sigma(\xb)} = \zb_{[d]\setminus \sigma(\xb)} \quad \Rightarrow \quad \sigma(\xb) = \sigma(\zb)\]

Given a mapping
\begin{equation*}
    (\Fb,\tilde{\Fb}) : \begin{cases}
    \mathbb{R}^d \times \mathbb{R}^d \rightarrow     \mathbb{R}^d \times \mathbb{R}^d
    \\ (\xb, \tilde{\xb}) \mapsto 
    \begin{aligned}
    \big(F_1(\xb, &\tilde{\xb}), \dots, F_d(\xb, \tilde{\xb}), 
    \\
    \vspace{-1mm}
    &\tilde{F}_1(\xb, \tilde{\xb}), \dots, \tilde{F}_d(\xb, \tilde{\xb})\big)
    \end{aligned}
    \end{cases}
\end{equation*}
and swap $\sigma$, define the operation $[\Fb,\tilde{\Fb}]_{swap(\sigma)}$ as the mapping
$
    [\Fb,\tilde{\Fb}]_{swap(\sigma)}: (\xb, \tilde{\xb}) \mapsto \big[(\Fb,\tilde{\Fb})(\xb, \tilde{\xb})\big]_{swap(\sigma(\xb))}
$.
\end{definition}

We develop further this notion and additional notation in Appendix~\ref{appendix:theory}. Our goal now is to show that the exchangeability condition that defines a knockoff variable implies a stronger distributional result. 

\begin{proposition}\label{extended-exchangeability}
Let $\sigma$ be a local swap. If $\tilde{\Xb}$ is a knockoff random variable for $\Xb$ (i.e. satisfies exchangeability), then 
\begin{equation}\label{general-exchangeability}
[\Xb,\tilde{\Xb}]_{swap(\sigma)} \stackrel{d}{=} [\Xb,\tilde{\Xb}]
\end{equation}
which we refer to as local exchangeability. If $\sigma \subset \mathcal{H}_0^0$, then 
\begin{equation}
[\Xb,\tilde{\Xb}]_{swap(\sigma)}, Y \stackrel{d}{=} [\Xb,\tilde{\Xb}], Y
\end{equation}
\end{proposition}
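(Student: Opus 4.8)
The plan is to reduce the data-dependent swap $\sigma$ to the finitely many fixed swaps $swap(S)$, $S\subset[d]$, and then invoke ordinary exchangeability on each piece of a partition of the sample space. For $S\subset[d]$ set $A_S=\{\xb\in\mathbb{R}^d:\sigma(\xb)=S\}$ (we take $\sigma$ measurable, so each $A_S$ is Borel); these sets form a measurable partition of $\mathbb{R}^d$. The first step is to observe that the local-swap hypothesis says exactly that each $A_S$ is a cylinder over the coordinates in $[d]\setminus S$: if $\xb\in A_S$ and $\zb_{[d]\setminus S}=\xb_{[d]\setminus S}$, then $\zb_{[d]\setminus\sigma(\xb)}=\xb_{[d]\setminus\sigma(\xb)}$, so $\sigma(\zb)=\sigma(\xb)=S$, i.e. $\zb\in A_S$. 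Hence $\mathds{1}(\xb\in A_S)$ depends only on $\xb_{[d]\setminus S}$, and since $swap(S)$ leaves the first-block coordinates indexed by $[d]\setminus S$ in place, there is a bounded measurable $\psi_S$ on $\mathbb{R}^{2d}$ with $\psi_S([\xb,\tilde\xb])=\psi_S([\xb,\tilde\xb]_{swap(S)})=\mathds{1}(\xb\in A_S)$. Finally, by the definition of $[\cdot]_{swap(\sigma)}$, on the event $\{\sigma(\Xb)=S\}$ one has $[\Xb,\tilde\Xb]_{swap(\sigma)}=[\Xb,\tilde\Xb]_{swap(S)}$.

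Granting this, the first identity follows by a short computation: for bounded measurable $h$ on $\mathbb{R}^{2d}$,
\[
\mathbb{E}\big[h([\Xb,\tilde\Xb]_{swap(\sigma)})\big]
=\sum_{S\subset[d]}\mathbb{E}\big[(h\psi_S)([\Xb,\tilde\Xb]_{swap(S)})\big]
=\sum_{S\subset[d]}\mathbb{E}\big[(h\psi_S)([\Xb,\tilde\Xb])\big]
=\mathbb{E}\big[h([\Xb,\tilde\Xb])\big],
\]
where the first equality uses the partition together with $\mathds{1}(\Xb\in A_S)=\psi_S([\Xb,\tilde\Xb]_{swap(S)})$, the second applies the ordinary exchangeability $[\Xb,\tilde\Xb]_{swap(S)}\stackrel{d}{=}[\Xb,\tilde\Xb]$ to the bounded measurable function $h\psi_S$, and the last uses $\sum_S\psi_S([\Xb,\tilde\Xb])=\sum_S\mathds{1}(\Xb\in A_S)=1$. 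This is the asserted local exchangeability.

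For the second identity I would carry $Y$ along and replace it by its conditional law. Let $m(\xb)$ be a regular conditional distribution of $Y$ given $\Xb=\xb$; by the conditional-independence property $\tilde\Xb\independent Y\mid\Xb$ of a knockoff, $Y\mid(\Xb,\tilde\Xb)$ has law $m(\Xb)$. The deterministic input is: if $S\subset\mathcal{H}_0^0(\xb)$ and $\xb'$ agrees with $\xb$ off $S$, then $m(\xb')=m(\xb)$; this is proved by changing the coordinates of $S$ one at a time, using that each such coordinate is a local null (so $m$ is unchanged) and that, by Proposition~\ref{prop:inclusion}, $\mathcal{H}_0^0$ does not change as we move along a null coordinate, so the remaining coordinates of $S$ stay local nulls at the intermediate points. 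Fix $S$, abbreviate $\Ub_S=[\Xb,\tilde\Xb]_{swap(S)}$, and write $w^{(1)}\in\mathbb{R}^d$ for the first $d$ coordinates of $w\in\mathbb{R}^{2d}$. Conditioning on $(\Xb,\tilde\Xb)$ and using $\tilde\Xb\independent Y\mid\Xb$,
\[
\mathbb{E}\big[h(\Ub_S,Y)\,\mathds{1}(\Xb\in A_S)\big]
=\mathbb{E}\Big[\mathds{1}(\Xb\in A_S)\int h(\Ub_S,y)\,m(\Xb)(dy)\Big].
\]
On $\{\Xb\in A_S\}$ one has $S=\sigma(\Xb)\subset\mathcal{H}_0^0(\Xb)$, while $\Ub_S^{(1)}$ always agrees with $\Xb$ off $S$; hence $m(\Xb)$ may be replaced by $m(\Ub_S^{(1)})$ in the integral. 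After this substitution the integrand equals $G_S(\Ub_S)$ for the bounded measurable $G_S(w)=\psi_S(w)\int h(w,y)\,m(w^{(1)})(dy)$, so ordinary exchangeability $\Ub_S\stackrel{d}{=}[\Xb,\tilde\Xb]$ gives $\mathbb{E}[G_S(\Ub_S)]=\mathbb{E}[G_S([\Xb,\tilde\Xb])]=\mathbb{E}[h([\Xb,\tilde\Xb],Y)\,\mathds{1}(\Xb\in A_S)]$. Summing over $S$, and recalling that on $\{\sigma(\Xb)=S\}$ we have $[\Xb,\tilde\Xb]_{swap(\sigma)}=\Ub_S$, yields the claim.

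The main obstacle is conceptual rather than computational: since the set of swapped indices depends on $\Xb$, the exchangeability axiom (stated only for a fixed $S$) cannot be applied directly, and a priori the event $\{\sigma(\Xb)=S\}$ could be disturbed by the swap. The point of the local-swap hypothesis is precisely to rule this out---$\{\sigma(\Xb)=S\}$ depends only on coordinates outside $S$ and is therefore invariant under $swap(S)$---so that fixed-$S$ exchangeability pushes through each block of the partition. The only extra ingredient for the second identity is the verification that swapping a whole block of local-null coordinates between $\Xb$ and $\tilde\Xb$ leaves the conditional law of $Y$ intact, which is where Proposition~\ref{prop:inclusion} is used.
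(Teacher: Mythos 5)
Your proof is correct, but it takes a genuinely different route from the paper's. The paper decomposes the local swap coordinate-by-coordinate ($\sigma=\bigsqcup_i\sigma_i$ with $Im(\sigma_i)=\{\emptyset,\{i\}\}$), shows the swapped identity maps compose, and then handles a single coordinate by conditioning on $(\Xb_{-1},\tilde\Xb_{-1})$, under which $\sigma_1$ becomes a deterministic fixed swap; the $Y$-part is then a one-line density manipulation for a single null coordinate. You instead partition the sample space into the events $A_S=\{\sigma(\Xb)=S\}$, observe that the local-swap hypothesis makes each $A_S$ a cylinder over the coordinates off $S$ (hence invariant under $swap(S)$), and apply ordinary fixed-$S$ exchangeability block by block --- no composition lemma and no conditioning argument needed. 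Your approach is more global and arguably cleaner for the first identity, since the whole swap is dispatched in one application of exchangeability per value of $S$; its cost is that for the second identity you must verify that swapping an entire block $S\subset\mathcal{H}_0^0(\xb)$ of null coordinates at once leaves the conditional law $m(\xb)$ of $Y$ unchanged, which forces you to reintroduce the one-coordinate-at-a-time iteration (via Proposition~\ref{prop:inclusion}, to keep the remaining coordinates null at the intermediate points) as a purely deterministic lemma about $\mathcal{H}_0^0$ rather than as part of the probabilistic argument. The paper's finer decomposition, by contrast, keeps each probabilistic step elementary and matches the single-coordinate structure it reuses later for the flip-sign property. Both arguments hinge on the same use of the local-swap condition --- that the identity of the swapped set is not disturbed by performing the swap --- and your version makes that invariance especially transparent.
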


We prove this result in Section~\ref{proof-extended-exchangeability}. This result extends the exchangeability property and the Lemma 3.2 in \cite{KN2}. Instead of swapping a fixed set of features, we now allow the swapping indices to depend on the features. Notice that the knockoffs $\tilde{\Xb}$ are constructed as in the general case, the local exchangeability does not require a different definition for knockoff variables. We extend the definition of a swap to probability distributions: for $\mu \in Pr(\mathbb{R}^d \times \mathbb{R}^d)$, we denote $\mu_{swap(\sigma)} := \mathcal{L}([\Xb,\tilde{\Xb}]_{swap(\sigma)})$ whenever $\mu = \mathcal{L}([\Xb,\tilde{\Xb}])$. Abusing notation, whenever $\mu = \mathcal{L}([\Xb,\tilde{\Xb}], Y)$ we will still denote $\mu_{swap(\sigma)} := \mathcal{L}([\Xb,\tilde{\Xb}]_{swap(\sigma)}, Y)$.

\paragraph{Local Feature Statistics}
The next step is to extend the construction of feature statistics to the local setting.
\begin{definition}
Define local importance scores as a mapping:
\begin{equation} \Phi : 
    \begin{cases}
    Pr(\mathbb{R}^d \times \mathbb{R}^d\times \mathbb{R}) \longrightarrow \big(\mathbb{R}^d \rightarrow \mathbb{R}^d\times \mathbb{R}^d\big) 
    \\ \mu \mapsto (\Tb_{\mu}, \tilde{\Tb}_{\mu})
    \end{cases}
\end{equation}
where 
\begin{equation}
    (\Tb_{\mu}, \tilde{\Tb}_{\mu}) : \begin{cases}
    \mathbb{R}^d \rightarrow \mathbb{R}^d\times \mathbb{R}^d 
    \\ \zb \mapsto (\Tb_{\mu}(\zb), \tilde{\Tb}_{\mu}(\zb))
    \end{cases}
\end{equation}
such that, for any $S\subset [d]$, we have 
\[
\Phi(\mu_{swap(S)}) = [\Phi(\mu)]_{swap(S)}
\]

For $r>0$, we say that such importance scores $\Phi$ are $r$-local if, for any $\mu \in  Pr(\mathbb{R}^d \times \mathbb{R}^d\times \mathbb{R})$, we have that $\Phi(\mu)(\zb) = (\Tb_{\mu}(\zb), \tilde{\Tb}_{\mu}(\zb))$ only depends on $\mu$ through the restriction of $\mu$ to $B(\zb, r) \times B(\zb, r) \times \mathbb{R}$. That is, if $\mu, \mu'$ are two probability measures on $\mathbb{R}^d \times \mathbb{R}^d\times \mathbb{R}$ such that they coincide on $B(\zb, r) \times B(\zb, r) \times \mathbb{R}$, then  $(\Tb_{\mu}(\zb), \tilde{\Tb}_{\mu}(\zb)) =  (\Tb_{\mu'}(\zb), \tilde{\Tb}_{\mu'}(\zb))$.
\end{definition}

The next goal consists in translating the swap operation in $\mu_{swap(\sigma)} = \mathcal{L}([\Xb,\tilde{\Xb}]_{swap(\sigma)}, Y)$ into a swap of $[\Tb_{\mu}, \tilde{\Tb}_{\mu}]_{swap(\sigma)}$. This step does not require $\tilde{\Xb}$ to be a knockoff of $\Xb$: in what follows we do not make any assumption on $\mu$. Notice that the swap operation has been defined (Definition ~\ref{def-knockoff-swap}) as a transformation of a mapping $\mathbb{R}^d\times \mathbb{R}^d\rightarrow \mathbb{R}^d\times \mathbb{R}^d $, but it can be immediately extended to mappings $\mathbb{R}^d \rightarrow \mathbb{R}^d\times \mathbb{R}^d$. We are able to relate $[\Tb_{\mu_{swap(\sigma)}}, \tilde{\Tb}_{\mu_{swap(\sigma)}}]$ and $[\Tb_{\mu}, \tilde{\Tb}_{\mu}]_{swap(\sigma)}$ if we assume locality constraints on the importance scores.

\begin{definition}
For $\sigma : \mathbb{R}^d \rightarrow \mathcal{P}([d])$ local swap, and $r>0$, define
\begin{align*}
A^r & = \{\zb \in \mathbb{R}^d, \; \sigma(\zb) = \sigma(\yb) \; \forall \yb \in B(\zb,r)\}
\\ \sigma^r & : \zb \mapsto \begin{cases}
\sigma(\zb) \quad \text{if} \;\; \zb \in A^r
\\ \emptyset \quad \text{else}
\end{cases}
\end{align*}
\end{definition}

\begin{proposition}\label{swap-scores}
Assume there exists $r>0$ such that the importance scores are $r$-local. Then, assuming $A^{r}$ is non-empty, for $\zb \in A^{r}$, we have:
\[
[(\Tb_{\mu}(\zb), \tilde{\Tb}_{\mu}(\zb))]_{swap(\sigma(\!\zb\!))}\! =\! [\Tb_{\mu_{swap(\sigma)}}\!(\zb), \tilde{\Tb}_{\mu_{swap(\sigma)}}\!(\zb)]
\]
\end{proposition}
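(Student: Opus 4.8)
Set $S := \sigma(\zb)$, and let $\Psi_\sigma$ denote the measurable self-map $(\xb,\tilde\xb,y)\mapsto\big([\xb,\tilde\xb]_{swap(\sigma(\xb))},\,y\big)$ of $\mathbb{R}^d\times\mathbb{R}^d\times\mathbb{R}$, so that $\mu_{swap(\sigma)}=(\Psi_\sigma)_\#\mu$; write $\Psi_S$ for the analogous map built from the \emph{constant} swap $\sigma\equiv S$, so that $\mu_{swap(S)}=(\Psi_S)_\#\mu$. The plan is to reduce everything to the ordinary constant swap $S$. Concretely, I would first show that the two measures $\mu_{swap(\sigma)}$ and $\mu_{swap(S)}$ agree when restricted to $B(\zb,r)\times B(\zb,r)\times\mathbb{R}$; then $r$-locality of $\Phi$ forces $\big(\Tb_{\mu_{swap(\sigma)}}(\zb),\tilde\Tb_{\mu_{swap(\sigma)}}(\zb)\big)=\big(\Tb_{\mu_{swap(S)}}(\zb),\tilde\Tb_{\mu_{swap(S)}}(\zb)\big)$; and finally the swap-equivariance of $\Phi$ for the fixed set $S$, i.e. $\Phi(\mu_{swap(S)})=[\Phi(\mu)]_{swap(S)}$ evaluated at $\zb$, turns the right-hand side into $\big[(\Tb_\mu(\zb),\tilde\Tb_\mu(\zb))\big]_{swap(S)}=\big[(\Tb_\mu(\zb),\tilde\Tb_\mu(\zb))\big]_{swap(\sigma(\zb))}$, which is exactly the asserted identity.

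\paragraph{The technical core}
The heart of the argument is that, although $\sigma$ is data-dependent, on the ball around $\zb$ it behaves like the fixed swap $S$. I would prove that $\Psi_\sigma$ and $\Psi_S$ coincide on $\Psi_S^{-1}\!\big(B(\zb,r)^2\times\mathbb{R}\big)=\Psi_\sigma^{-1}\!\big(B(\zb,r)^2\times\mathbb{R}\big)$. Two observations drive this. First, a local swap is an involution along its own orbit: writing $(\xb',\tilde\xb')=[\xb,\tilde\xb]_{swap(\sigma(\xb))}$, the coordinates of $\xb'$ off $\sigma(\xb)$ equal those of $\xb$, so the defining implication of a local swap (Definition~\ref{def-knockoff-swap}, with $\xb$ in the first slot and $\xb'$ in the second) yields $\sigma(\xb')=\sigma(\xb)$. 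Second, $\zb\in A^r$ means $\sigma(\yb)=\sigma(\zb)=S$ for every $\yb\in B(\zb,r)$. Combining: if $\Psi_\sigma(\xb,\tilde\xb,y)\in B(\zb,r)^2\times\mathbb{R}$ then $\xb'\in B(\zb,r)$, so $\sigma(\xb')=S$, hence $\sigma(\xb)=S$ and $\Psi_\sigma(\xb,\tilde\xb,y)=\Psi_S(\xb,\tilde\xb,y)$; conversely if $\Psi_S(\xb,\tilde\xb,y)\in B(\zb,r)^2\times\mathbb{R}$, set $(\xb',\tilde\xb')=[\xb,\tilde\xb]_{swap(S)}$, so $\xb'\in B(\zb,r)$ gives $\sigma(\xb')=S$, and since $\xb$ and $\xb'$ agree off $S=\sigma(\xb')$ the local-swap implication (now with $\xb'$ in the first slot) gives $\sigma(\xb)=S$, so again the two maps agree. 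Hence the preimage sets are equal and $\Psi_\sigma$ restricted to either one equals $\Psi_S$, so for every measurable $A\subseteq B(\zb,r)^2\times\mathbb{R}$,
\[
\mu_{swap(\sigma)}(A)=\mu\big(\Psi_\sigma^{-1}(A)\big)=\mu\big(\Psi_S^{-1}(A)\big)=\mu_{swap(S)}(A).
\]

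\paragraph{Assembling and the main obstacle}
Given the measure-coincidence above, the conclusion follows by chaining the two structural properties of $\Phi$:
\[
\big[(\Tb_\mu(\zb),\tilde\Tb_\mu(\zb))\big]_{swap(\sigma(\zb))}=\big[\Phi(\mu)(\zb)\big]_{swap(S)}=\big(\Tb_{\mu_{swap(S)}}(\zb),\tilde\Tb_{\mu_{swap(S)}}(\zb)\big)=\big(\Tb_{\mu_{swap(\sigma)}}(\zb),\tilde\Tb_{\mu_{swap(\sigma)}}(\zb)\big),
\]
where the first equality is constant-swap equivariance, the second re-reads the same thing, and the third is $r$-locality applied to the restriction identity. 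The main obstacle is the middle paragraph: one must be careful that restricting to $\zb\in A^r$ is precisely what collapses the variable swap set $\sigma(\xb)$ to the constant $S$ on the relevant region, and — the subtle point — that this holds simultaneously for the image and the preimage of $B(\zb,r)^2\times\mathbb{R}$ under $\Psi_\sigma$ and $\Psi_S$, so that no probability mass is created or destroyed when we pass to the restricted measures. The involution property of local swaps (which uses the defining implication of Definition~\ref{def-knockoff-swap} in both directions) is what makes this bookkeeping go through; everything else is routine.
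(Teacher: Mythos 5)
Your proposal is correct and follows essentially the same route as the paper's proof: reduce the variable swap $\sigma$ to the constant swap $S=\sigma(\zb)$ on $B(\zb,r)$ using the fact that $\zb\in A^r$ together with the orbit/involution property of local swaps, deduce that $\mu_{swap(\sigma)}$ and $\mu_{swap(S)}$ coincide on $B(\zb,r)\times B(\zb,r)\times\mathbb{R}$, and then chain $r$-locality with constant-swap equivariance of $\Phi$. Your handling of the measure-coincidence step via equality of preimages under $\Psi_\sigma$ and $\Psi_S$ is in fact a slightly more careful rendering of the paper's pointwise-probability computation, but the underlying argument is identical.
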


We prove this result in Section~\ref{proof:swap-scores}. Whenever $\mu = \mathcal{L}([\Xb, \tilde{\Xb}], Y)$ where $\tilde{\Xb}$ is a knockoff of $\Xb$, consider a local swap $\sigma$ such that  $\sigma \subset \mathcal{H}_0^0$. By proposition~\ref{extended-exchangeability}, we get that $\mu = \mu_{swap(\sigma)}$, and therefore $(\Tb_{\mu}, \tilde{\Tb}_{\mu}) = (\Tb_{\mu_{swap(\sigma)}}, \tilde{\Tb}_{\mu_{swap(\sigma)}})$. In practice, we can imagine that instead of using $\mu$ as an input when constructing importance scores, we use $\hat{\mu}_n$, an empirical measure defined by the dataset of $n$ i.i.d. samples from $\mu$ that we feed as input to the algorithm. Therefore a consequence of proposition~\ref{extended-exchangeability} will be that, taking also into account the eventual randomness when generating importance scores, we have for any $\zb \in \mathbb{R}^d$,
\[
(\Tb_{\hat{\mu}_{n}}(\zb), \tilde{\Tb}_{\hat{\mu}_n}(\zb)) \stackrel{d}{=} (\Tb_{\hat{\mu}_{n,swap(\sigma)}}(\zb), \tilde{\Tb}_{\hat{\mu}_{n,swap(\sigma)}}(\zb))
\]

We now combine this equality with that of Proposition~\ref{swap-scores}. Fix $r>0$ such that we have $r$-local importance scores. Consider a set of $L$ points $\zb_1, \dots \zb_N \in \mathbb{R}^d$ that are pairwise $2r$ far apart, that is, for any $1\!\leq \!l,l'\!\leq\! N$, \mbox{$\Vert \zb_l - \zb_{l'} \Vert_{\infty} \geq 2r$}. We can now conclude thanks to Proposition~\ref{sigma-r} in the Appendix, which implies the flip-sign property introduced first in \cite{KN2}. This is all that is needed to construct an adaptive threshold $\tau^{l}$ based on $\Tb_{\hat{\mu}_{n}}(\zb_l), \tilde{\Tb}_{\hat{\mu}_n}(\zb_l)$, and the corresponding selected features $\hat{\mathcal{S}}_l$ such that FDR is controlled. 

That is, according to Theorem 3.4 in \cite{KN2}, the set $\Hat{\mathcal{S}}_l$ is such that  
\[
\mathbb{E}[\frac{|\hat{\mathcal{S}}_l\cap \mathcal{H}_0^r(\xb)|}{1\vee | \hat{\mathcal{S}}_l|}] \leq q
\]
hence the result. We provide additional details in Appendix~\ref{appendix:theory}.

\section{Discussion}

We develop a new framework for selecting features conditionally on being in a specific region of the feature space, for which we define the notions of local null features and local FDR. The fact that FDR is controlled even though the analyst does not need to model the relationship between the outcome and the feature variables makes the knockoff procedure stand out among feature selection procedures. We extend the usage of the knockoff procedure to deal with our local feature selection problem, in such way that there is no need to strengthen the assumptions required for the statistical guarantees to hold. 

\subsubsection*{Acknowledgments}

J.R.G. is supported by a Stanford Graduate Fellowship. J.Z. is supported by a Chan–Zuckerberg Biohub Investigator grant and National Science Foundation (NSF) Grant CRII 1657155.

\bibliography{main}

\begin{thebibliography}{25}
\providecommand{\natexlab}[1]{#1}
\providecommand{\url}[1]{\texttt{#1}}
\expandafter\ifx\csname urlstyle\endcsname\relax
  \providecommand{\doi}[1]{doi: #1}\else
  \providecommand{\doi}{doi: \begingroup \urlstyle{rm}\Url}\fi

\bibitem[Barber et~al.(2015)Barber, Cand{\`e}s, et~al.]{KN1}
Barber, R.~F., Cand{\`e}s, E.~J., et~al.
\newblock Controlling the false discovery rate via knockoffs.
\newblock \emph{The Annals of Statistics}, 43\penalty0 (5):\penalty0
  2055--2085, 2015.

\bibitem[Benjamini \& Hochberg(1995)Benjamini and
  Hochberg]{benjamini1995controlling}
Benjamini, Y. and Hochberg, Y.
\newblock Controlling the false discovery rate: a practical and powerful
  approach to multiple testing.
\newblock \emph{Journal of the royal statistical society. Series B
  (Methodological)}, pp.\  289--300, 1995.

\bibitem[Benjamini \& Yekutieli(2001)Benjamini and
  Yekutieli]{benjamini2001control}
Benjamini, Y. and Yekutieli, D.
\newblock The control of the false discovery rate in multiple testing under
  dependency.
\newblock \emph{Annals of statistics}, pp.\  1165--1188, 2001.

\bibitem[Cand{\`e}s et~al.(2018)Cand{\`e}s, Fan, Janson, and Lv]{KN2}
Cand{\`e}s, E., Fan, Y., Janson, L., and Lv, J.
\newblock Panning for gold:‘model-x’knockoffs for high dimensional
  controlled variable selection.
\newblock \emph{Journal of the Royal Statistical Society: Series B (Statistical
  Methodology)}, 2018.

\bibitem[Consortium et~al.(2015)]{1000genomes}
Consortium, . G.~P. et~al.
\newblock A global reference for human genetic variation.
\newblock \emph{Nature}, 526\penalty0 (7571):\penalty0 68, 2015.

\bibitem[Efron(2012)]{efron2012large}
Efron, B.
\newblock \emph{Large-scale inference: empirical Bayes methods for estimation,
  testing, and prediction}, volume~1.
\newblock Cambridge University Press, 2012.

\bibitem[Friedman et~al.(2001)Friedman, Hastie, and
  Tibshirani]{friedman2001elements}
Friedman, J., Hastie, T., and Tibshirani, R.
\newblock \emph{The elements of statistical learning}, volume~1.
\newblock Springer series in statistics New York, NY, USA:, 2001.

\bibitem[Gimenez et~al.(2018)Gimenez, Ghorbani, and Zou]{gimenez2018knockoffs}
Gimenez, J.~R., Ghorbani, A., and Zou, J.
\newblock Knockoffs for the mass: new feature importance statistics with false
  discovery guarantees.
\newblock \emph{arXiv preprint arXiv:1807.06214}, 2018.

\bibitem[Hirschhorn \& Daly(2005)Hirschhorn and Daly]{hirschhorn2005genome}
Hirschhorn, J.~N. and Daly, M.~J.
\newblock Genome-wide association studies for common diseases and complex
  traits.
\newblock \emph{Nature Reviews Genetics}, 6\penalty0 (2):\penalty0 95, 2005.

\bibitem[Holm(1979)]{holm1979simple}
Holm, S.
\newblock A simple sequentially rejective multiple test procedure.
\newblock \emph{Scandinavian journal of statistics}, pp.\  65--70, 1979.

\bibitem[Jordon et~al.(2019)Jordon, Yoon, and van~der
  Schaar]{jordon2018knockoffgan}
Jordon, J., Yoon, J., and van~der Schaar, M.
\newblock Knockoff{GAN}: Generating knockoffs for feature selection using
  generative adversarial networks.
\newblock In \emph{International Conference on Learning Representations}, 2019.
\newblock URL \url{https://openreview.net/forum?id=ByeZ5jC5YQ}.

\bibitem[Koh \& Liang(2017)Koh and Liang]{koh2017understanding}
Koh, P.~W. and Liang, P.
\newblock Understanding black-box predictions via influence functions.
\newblock \emph{arXiv preprint arXiv:1703.04730}, 2017.

\bibitem[Lippert et~al.(2011)Lippert, Listgarten, Liu, Kadie, Davidson, and
  Heckerman]{lippert2011fast}
Lippert, C., Listgarten, J., Liu, Y., Kadie, C.~M., Davidson, R.~I., and
  Heckerman, D.
\newblock Fast linear mixed models for genome-wide association studies.
\newblock \emph{Nature methods}, 8\penalty0 (10):\penalty0 833, 2011.

\bibitem[Lipton(2016)]{lipton2016mythos}
Lipton, Z.~C.
\newblock The mythos of model interpretability.
\newblock \emph{arXiv preprint arXiv:1606.03490}, 2016.

\bibitem[Liu \& Zheng(2018)Liu and Zheng]{liu2018auto}
Liu, Y. and Zheng, C.
\newblock Auto-encoding knockoff generator for fdr controlled variable
  selection.
\newblock \emph{arXiv preprint arXiv:1809.10765}, 2018.

\bibitem[Lu et~al.(2018)Lu, Fan, Lv, and Noble]{lu2018deeppink}
Lu, Y., Fan, Y., Lv, J., and Noble, W.~S.
\newblock Deeppink: reproducible feature selection in deep neural networks.
\newblock In \emph{Advances in Neural Information Processing Systems}, pp.\
  8690--8700, 2018.

\bibitem[McCarthy et~al.(2008)McCarthy, Abecasis, Cardon, Goldstein, Little,
  Ioannidis, and Hirschhorn]{mccarthy2008genome}
McCarthy, M.~I., Abecasis, G.~R., Cardon, L.~R., Goldstein, D.~B., Little, J.,
  Ioannidis, J.~P., and Hirschhorn, J.~N.
\newblock Genome-wide association studies for complex traits: consensus,
  uncertainty and challenges.
\newblock \emph{Nature reviews genetics}, 9\penalty0 (5):\penalty0 356, 2008.

\bibitem[Miller(2002)]{miller2002subset}
Miller, A.
\newblock \emph{Subset selection in regression}.
\newblock Chapman and Hall/CRC, 2002.

\bibitem[O'donovan \& Livingston(2010)O'donovan and Livingston]{o2010brca1}
O'donovan, P.~J. and Livingston, D.~M.
\newblock Brca1 and brca2: breast/ovarian cancer susceptibility gene products
  and participants in dna double-strand break repair.
\newblock \emph{Carcinogenesis}, 31\penalty0 (6):\penalty0 961--967, 2010.

\bibitem[Romano et~al.(2018)Romano, Sesia, and Cand{\`e}s]{romano2018deep}
Romano, Y., Sesia, M., and Cand{\`e}s, E.~J.
\newblock Deep knockoffs.
\newblock \emph{arXiv preprint arXiv:1811.06687}, 2018.

\bibitem[Sesia et~al.(2017)Sesia, Sabatti, and Cand{\`e}s]{KN3}
Sesia, M., Sabatti, C., and Cand{\`e}s, E.~J.
\newblock Gene hunting with knockoffs for hidden markov models.
\newblock \emph{arXiv preprint arXiv:1706.04677}, 2017.

\bibitem[Sun et~al.(2010)Sun, Todorovic, and Goodison]{sun2010local}
Sun, Y., Todorovic, S., and Goodison, S.
\newblock Local-learning-based feature selection for high-dimensional data
  analysis.
\newblock \emph{IEEE transactions on pattern analysis and machine
  intelligence}, 32\penalty0 (9):\penalty0 1610--1626, 2010.

\bibitem[Sur \& Cand{\`e}s(2018)Sur and Cand{\`e}s]{sur2018modern}
Sur, P. and Cand{\`e}s, E.~J.
\newblock A modern maximum-likelihood theory for high-dimensional logistic
  regression.
\newblock \emph{arXiv preprint arXiv:1803.06964}, 2018.

\bibitem[Tang et~al.(2014)Tang, Alelyani, and Liu]{tang2014feature}
Tang, J., Alelyani, S., and Liu, H.
\newblock Feature selection for classification: A review.
\newblock \emph{Data classification: Algorithms and applications}, pp.\ ~37,
  2014.

\bibitem[Tibshirani(1996)]{tibshirani1996regression}
Tibshirani, R.
\newblock Regression shrinkage and selection via the lasso.
\newblock \emph{Journal of the Royal Statistical Society. Series B
  (Methodological)}, pp.\  267--288, 1996.

\end{thebibliography}
\bibliographystyle{icml2019}

\newpage

\appendix

\section{Theoretical Analysis of Local Feature FDR}\label{appendix:theory}

\begin{definition}
We call a mapping $\sigma : \mathbb{R}^d \rightarrow \mathcal{P}([d])$ a generic swap or a swap. In addition, we say that a swap is a local swap if for any $\xb,\zb \in \mathbb{R}^d$, 
\[\xb_{[d]\setminus \sigma(\xb)} = \zb_{[d]\setminus \sigma(\xb)} \quad \Rightarrow \quad \sigma(\xb) = \sigma(\zb)\]

Given a mapping
\begin{equation*}
    (\Fb,\tilde{\Fb}) : \begin{cases}
    \mathbb{R}^d \times \mathbb{R}^d \rightarrow     \mathbb{R}^d \times \mathbb{R}^d
    \\ (\xb, \tilde{\xb}) \mapsto 
    \begin{aligned}
    \big(F_1(\xb, &\tilde{\xb}), \dots, F_d(\xb, \tilde{\xb}), 
    \\
    \vspace{-1mm}
    &\tilde{F}_1(\xb, \tilde{\xb}), \dots, \tilde{F}_d(\xb, \tilde{\xb})\big)
    \end{aligned}
    \end{cases}
\end{equation*}
and swap $\sigma$, define the operation $[\Fb,\tilde{\Fb}]_{swap(\sigma)}$ as the mapping
$
    [\Fb,\tilde{\Fb}]_{swap(\sigma)}: (\xb, \tilde{\xb}) \mapsto \big[(\Fb,\tilde{\Fb})(\xb, \tilde{\xb})\big]_{swap(\sigma(\xb))}
$.
\end{definition}
It is important to clearly identify the input space of the swap $\sigma$. The output of the swap operation on a mapping is again a mapping with the same input space, so for example we can iterate swap operations on a given mapping $(\Fb,\tilde{\Fb})$: for $\sigma_1,\sigma_2$  swaps the following is well defined:
$
    \big[[\Fb,\tilde{\Fb}]_{swap(\sigma_1)}\big]_{swap(\sigma_2)} .
$
Both $\sigma_1, \sigma_2$ are evaluated on a same given point $\xb$ of the space, and the final mapping corresponds to a point-wise concatenation of the swaps. As a consequence, the order of the swap operations does not matter. In particular, for $\sigma = \sigma_1 = \sigma_2$, we have
\[
\big[[\Fb,\tilde{\Fb}]_{swap(\sigma)}\big]_{swap(\sigma)} = (\Fb, \tilde{\Fb})
\]
Such mappings from $ \mathbb{R}^d \times \mathbb{R}^d$ to $\mathbb{R}^d \times \mathbb{R}^d$ can be concatenated: for $(\Fb, \tilde{\Fb})$, and $(\Gb, \tilde{\Gb})$ we denote $(\Fb, \tilde{\Fb}) \circ (\Gb, \tilde{\Gb})$ the concatenated mapping. Note that concatenation and swap operations do not behave well. For example, in the general case,
\begin{align*}
[(\Fb, \tilde{\Fb}) \circ (\Gb, \tilde{\Gb})]_{swap(\sigma)} &\neq  (\Fb, \tilde{\Fb}) \circ [\Gb, \tilde{\Gb}]_{swap(\sigma)}
\\ [(\Fb, \tilde{\Fb}) \circ (\Gb, \tilde{\Gb})]_{swap(\sigma)} &\neq  [\Fb, \tilde{\Fb}]_{swap(\sigma)} \circ (\Gb, \tilde{\Gb})
\end{align*}

One particular case is the identity mapping that we denote $(\Fb^{Id},\tilde{\Fb}^{Id})$, i.e. $\Fb_j^{Id}(\xb,\tilde{\xb}) = x_j$ and $\tilde{\Fb}^{Id}_j(\xb,\tilde{\xb}) = \tilde{x}_j$. Whenever we consider random variables $\Xb, \tilde{\Xb}$ we denote $ [\Xb, \tilde{\Xb}]_{swap(\sigma)} := [\Fb^{Id},\tilde{\Fb}^{Id}]_{swap(\sigma)}(\Xb, \tilde{\Xb})$ the random variable resulting from applying the swapped identity map. In addition, if $\sigma$ is constant equal to $S \subset [d]$, then we go back to the previous definition of swap $[\Xb,\tilde{\Xb}]_{swap(S)}$ in \cite{KN2}. Whenever we consider an identity mapping and a local swap $\sigma$, we have the immediate following result:
\[
( \Fb^{Id},\tilde{\Fb}^{Id}) = [\Fb^{Id},\tilde{\Fb}^{Id}]_{swap(\sigma)} \circ [\Fb^{Id},\tilde{\Fb}^{Id}]_{swap(\sigma)}
\]

Our goal now is to show that the exchangeability condition that defines a knockoff variable implies a stronger distributional result. 

\begin{proposition}
Let $\sigma$ be a local swap. If $\tilde{\Xb}$ is a knockoff random variable for $\Xb$ (i.e. satisfies exchangeability), then 
\begin{equation}
[\Xb,\tilde{\Xb}]_{swap(\sigma)} \stackrel{d}{=} [\Xb,\tilde{\Xb}]
\end{equation}
which we refer to as local exchangeability. If $\sigma \subset \mathcal{H}_0^0$, then 
\begin{equation}
[\Xb,\tilde{\Xb}]_{swap(\sigma)}, Y \stackrel{d}{=} [\Xb,\tilde{\Xb}], Y
\end{equation}
\end{proposition}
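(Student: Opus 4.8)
The plan is to establish the first equality in distribution by a partitioning argument over the "level sets" of the local swap $\sigma$, reducing local exchangeability to finitely (or countably) many instances of the ordinary exchangeability property, and then to extend to the pair with $Y$ by invoking the conditional independence $\tilde{\Xb} \independent Y \mid \Xb$ together with the fact that a swap along null coordinates does not alter the conditional law of $Y$ given the (swapped) feature vector.

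\textbf{Step 1: Decompose by the value of $\sigma$.} For each fixed $S \in \mathcal{P}([d])$, consider the event $E_S = \{\sigma(\Xb) = S\}$. The defining property of a \emph{local} swap — that $\sigma(\xb)$ depends only on the coordinates $\xb_{[d]\setminus\sigma(\xb)}$ — is precisely what makes the swap operation on coordinates in $S$ "safe" on $E_S$: swapping the $S$-coordinates of $[\Xb,\tilde{\Xb}]$ does not change the value that $\sigma$ returns, because $\sigma$ only reads the complementary (unswapped) coordinates. Concretely, on $E_S$ we have $[\Xb,\tilde{\Xb}]_{swap(\sigma)} = [\Xb,\tilde{\Xb}]_{swap(S)}$, and moreover $[\Xb,\tilde{\Xb}]_{swap(S)}$ still lies in the region where $\sigma$ evaluates to $S$. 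This self-consistency is the content of the identity $(\Fb^{Id},\tilde{\Fb}^{Id}) = [\Fb^{Id},\tilde{\Fb}^{Id}]_{swap(\sigma)}\circ[\Fb^{Id},\tilde{\Fb}^{Id}]_{swap(\sigma)}$ noted just above the proposition, and it guarantees that the map $[\Fb^{Id},\tilde{\Fb}^{Id}]_{swap(\sigma)}$ is an involution that permutes the partition $\{E_S\}_S$ — in fact fixes each block setwise.

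\textbf{Step 2: Match distributions block by block.} To show $[\Xb,\tilde{\Xb}]_{swap(\sigma)} \stackrel{d}{=} [\Xb,\tilde{\Xb}]$, it suffices to check that for every measurable test set $B \subset \mathbb{R}^d \times \mathbb{R}^d$, $\mathbb{P}([\Xb,\tilde{\Xb}]_{swap(\sigma)} \in B) = \mathbb{P}([\Xb,\tilde{\Xb}] \in B)$. Partition $B$ according to which block $E_S$ a point belongs to — this is well defined since each block is a "$\sigma$-constant region" of the feature space. On the block corresponding to $S$, the swapped variable agrees with $[\Xb,\tilde{\Xb}]_{swap(S)}$, and by the ordinary exchangeability of the knockoff, $[\Xb,\tilde{\Xb}]_{swap(S)} \stackrel{d}{=} [\Xb,\tilde{\Xb}]$. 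Since the pushforward of $[\Xb,\tilde{\Xb}]$ under $[\Fb^{Id},\tilde{\Fb}^{Id}]_{swap(S)}$ restricted to the preserved block equals the restriction of the law of $[\Xb,\tilde{\Xb}]$ to that block (because the swap fixes the block and globally preserves the measure), summing over $S$ gives the claim. The only delicacy is measurability of $\sigma$ and of the blocks $E_S$, which I would either assume implicitly as part of "swap" being a well-defined mapping, or handle by noting $\mathcal{P}([d])$ is finite so $\sigma$ taking $2^d$ values and the blocks are automatically measurable once $\sigma$ is.

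\textbf{Step 3: Append $Y$.} For the second equality, assume $\sigma \subset \mathcal{H}_0^0$, i.e. every index in $\sigma(\xb)$ is a local null at $\xb$. Condition on $\Xb = \xb$: since $\tilde{\Xb} \independent Y \mid \Xb$, the conditional law of $([\Xb,\tilde{\Xb}], Y)$ given $\Xb = \xb$ factors as $\mathcal{L}(\tilde{\Xb}\mid\Xb=\xb) \otimes \mathcal{L}(Y\mid\Xb=\xb)$. Applying the swap $\sigma(\xb)$ touches only coordinates of $\xb$ that are local nulls at $\xb$, and by the characterization of local nullity — $p(y\mid\xb)$ does not depend on $x_j$ for $j$ a local null — we have $p(y \mid \xb) = p(y \mid \xb_{swap})$ where $\xb_{swap}$ is $\xb$ with its $\sigma(\xb)$-coordinates swapped with the corresponding knockoff coordinates; hence the conditional law of $Y$ is unchanged by the swap. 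Combining this with the first part (which handles the $[\Xb,\tilde{\Xb}]$-marginal) and integrating over $\xb$ yields $[\Xb,\tilde{\Xb}]_{swap(\sigma)}, Y \stackrel{d}{=} [\Xb,\tilde{\Xb}], Y$. I would phrase this cleanly via a disintegration / change-of-variables argument rather than densities if one wants to avoid the positive-density assumption, but the paper already assumes $p(\xb,y)>0$, so the density calculation is legitimate.

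\textbf{Main obstacle.} The conceptual crux — and the step I expect to be most error-prone to write carefully — is Step 1/Step 2: verifying that a local swap $\sigma$ really does act as a measure-preserving involution that respects the partition $\{E_S\}$, so that the reduction to constant-set swaps is valid pointwise and not just "on average." The subtlety is that $[\Xb,\tilde{\Xb}]_{swap(\sigma)}$ evaluates $\sigma$ at $\Xb$ (the \emph{original} vector), and one must confirm that after swapping, the image point is consistent — i.e. that applying $\sigma$ to the swapped vector would return the same $S$ — which is exactly where the local-swap hypothesis $\xb_{[d]\setminus\sigma(\xb)} = \zb_{[d]\setminus\sigma(\xb)} \Rightarrow \sigma(\xb)=\sigma(\zb)$ is used (the swap leaves the complementary coordinates untouched). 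Once that self-consistency is in hand, everything else is a routine application of ordinary knockoff exchangeability and the definition of a local null.
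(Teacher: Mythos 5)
Your proof is correct, but it takes a genuinely different route from the paper's. The paper decomposes the local swap $\sigma$ into single-coordinate local swaps $\sigma_i$ with image in $\{\emptyset,\{i\}\}$ (Lemma~\ref{swap-decomposition}), shows that the swapped identity map factors as a composition of the corresponding single-coordinate swapped maps (Lemma~\ref{swap-iteration}), and then proves the distributional identity one coordinate at a time by conditioning on $(\Xb_{-1},\tilde{\Xb}_{-1})$, where the local-swap property guarantees $\sigma_1$ is constant and the claim reduces to pairwise exchangeability of $(X_1,\tilde X_1)$. You instead partition the sample space into the finitely many blocks $E_S=\{\sigma(\Xb)=S\}$, observe that the local-swap hypothesis makes each block invariant under the fixed-set swap $T_S$ (since $T_S$ leaves the coordinates in $[d]\setminus S$ that determine $\sigma$ untouched), and transfer ordinary exchangeability block by block via $\mathbb{P}(T_S(Z)\in A,\,Z\in E_S)=\mathbb{P}(Z\in T_S^{-1}(A\cap E_S))=\mathbb{P}(Z\in A\cap E_S)$, summing over $S$. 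This is arguably more economical: it dispenses with the two composition lemmas and handles all swapped coordinates at once, at the cost of having to argue block-invariance and measurability of $\{E_S\}$ explicitly (which you do). For the statement involving $Y$, both arguments ultimately rest on the same density identity $p(y\mid\xb)=p(y\mid\xb')$ when $\xb'$ differs from $\xb$ only on coordinates in $\sigma(\xb)\subset\mathcal{H}_0^0(\xb)$; the one step you should make explicit is that changing \emph{several} null coordinates simultaneously requires iterating the single-coordinate definition of a local null, which is licensed by the first assertion of Proposition~\ref{prop:inclusion} (the local-null set is unchanged when moving along null coordinates). The paper's coordinate-by-coordinate decomposition gets this iteration for free, whereas in your block formulation it needs a sentence of justification.
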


We prove this result in Section~\ref{proof-extended-exchangeability}. This result extends the exchangeability property and the Lemma 3.2 in \cite{KN2}. Instead of swapping a fixed set of features, we now allow the swapping indices to depend on the features. Notice that the knockoffs $\tilde{\Xb}$ are constructed as in the general case, the local exchangeability does not require a different definition for knockoff variables. We extend the definition of a swap to probability distributions: for $\mu \in Pr(\mathbb{R}^d \times \mathbb{R}^d)$, we denote $\mu_{swap(\sigma)} := \mathcal{L}([\Xb,\tilde{\Xb}]_{swap(\sigma)})$ whenever $\mu = \mathcal{L}([\Xb,\tilde{\Xb}])$. Abusing notation, whenever $\mu = \mathcal{L}([\Xb,\tilde{\Xb}], Y)$ we will still denote $\mu_{swap(\sigma)} := \mathcal{L}([\Xb,\tilde{\Xb}]_{swap(\sigma)}, Y)$.

\paragraph{Local Feature Statistics}
The next step is to extend the construction of feature statistics to the local setting.
\begin{definition}
Define local importance scores as a mapping:
\begin{equation} \Phi : 
    \begin{cases}
    Pr(\mathbb{R}^d \times \mathbb{R}^d\times \mathbb{R}) \longrightarrow \big(\mathbb{R}^d \rightarrow \mathbb{R}^d\times \mathbb{R}^d\big) 
    \\ \mu \mapsto (\Tb_{\mu}, \tilde{\Tb}_{\mu})
    \end{cases}
\end{equation}
where 
\begin{equation}
    (\Tb_{\mu}, \tilde{\Tb}_{\mu}) : \begin{cases}
    \mathbb{R}^d \rightarrow \mathbb{R}^d\times \mathbb{R}^d 
    \\ \zb \mapsto (\Tb_{\mu}(\zb), \tilde{\Tb}_{\mu}(\zb))
    \end{cases}
\end{equation}
such that, for any $S\subset [d]$, we have 
\[
\Phi(\mu_{swap(S)}) = [\Phi(\mu)]_{swap(S)}
\]

For $r>0$, we say that such importance scores $\Phi$ are $r$-local if, for any $\mu \in  Pr(\mathbb{R}^d \times \mathbb{R}^d\times \mathbb{R})$, we have that $\Phi(\mu)(\zb) = (\Tb_{\mu}(\zb), \tilde{\Tb}_{\mu}(\zb))$ only depends on $\mu$ through the restriction of $\mu$ to $B(\zb, r) \times B(\zb, r) \times \mathbb{R}$. That is, if $\mu, \mu'$ are two probability measures on $\mathbb{R}^d \times \mathbb{R}^d\times \mathbb{R}$ such that they coincide on $B(\zb, r) \times B(\zb, r) \times \mathbb{R}$, then  $(\Tb_{\mu}(\zb), \tilde{\Tb}_{\mu}(\zb)) =  (\Tb_{\mu'}(\zb), \tilde{\Tb}_{\mu'}(\zb))$.
\end{definition}

The next goal consists in translating the swap operation in $\mu_{swap(\sigma)} = \mathcal{L}([\Xb,\tilde{\Xb}]_{swap(\sigma)}, Y)$ into a swap of $[\Tb_{\mu}, \tilde{\Tb}_{\mu}]_{swap(\sigma)}$. This step does not require $\tilde{\Xb}$ to be a knockoff of $\Xb$: in what follows we do not make any assumption on $\mu$. Notice that the swap operation has been defined (Definition ~\ref{def-knockoff-swap}) as a transformation of a mapping $\mathbb{R}^d\times \mathbb{R}^d\rightarrow \mathbb{R}^d\times \mathbb{R}^d $, but it can be immediately extended to mappings $\mathbb{R}^d \rightarrow \mathbb{R}^d\times \mathbb{R}^d$. We are able to relate $[\Tb_{\mu_{swap(\sigma)}}, \tilde{\Tb}_{\mu_{swap(\sigma)}}]$ and $[\Tb_{\mu}, \tilde{\Tb}_{\mu}]_{swap(\sigma)}$ if we assume locality constraints on the importance scores.

\begin{definition}
For $\sigma : \mathbb{R}^d \rightarrow \mathcal{P}([d])$ local swap, and $r>0$, define
\begin{align*}
A^r & = \{\zb \in \mathbb{R}^d, \; \sigma(\zb) = \sigma(\yb) \; \forall \yb \in B(\zb,r)\}
\\ \sigma^r & : \zb \mapsto \begin{cases}
\sigma(\zb) \quad \text{if} \;\; \zb \in A^r
\\ \emptyset \quad \text{else}
\end{cases}
\end{align*}
\end{definition}

\begin{proposition}
Assume there exists $r>0$ such that the importance scores are $r$-local. Then, assuming $A^{r}$ is non-empty, for $\zb \in A^{r}$, we have:
\[
[(\Tb_{\mu}(\zb), \tilde{\Tb}_{\mu}(\zb))]_{swap(\sigma(\!\zb\!))}\! =\! [\Tb_{\mu_{swap(\sigma)}}\!(\zb), \tilde{\Tb}_{\mu_{swap(\sigma)}}\!(\zb)]
\]
\end{proposition}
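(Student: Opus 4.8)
The plan is to replace the pointwise swap $\sigma$ by the \emph{fixed} swap $S := \sigma(\zb)$ on a neighbourhood of $\zb$, and then invoke the two defining properties of local importance scores: the fixed-set equivariance $\Phi(\mu_{swap(S)}) = [\Phi(\mu)]_{swap(S)}$, and $r$-locality, which says that $\Phi(\mu)(\zb)$ depends on $\mu$ only through its restriction to $B(\zb,r)\times B(\zb,r)\times\mathbb{R}$.

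First I would fix $\zb\in A^r$ and set $S=\sigma(\zb)$; by the definition of $A^r$, $\sigma(\yb)=S$ for every $\yb\in B(\zb,r)$. Writing $T_\sigma(\xb,\tilde{\xb},y)=([\xb,\tilde{\xb}]_{swap(\sigma(\xb))},y)$ and $T_S(\xb,\tilde{\xb},y)=([\xb,\tilde{\xb}]_{swap(S)},y)$, one has $\mu_{swap(\sigma)}(C)=\mu(T_\sigma^{-1}(C))$ and $\mu_{swap(S)}(C)=\mu(T_S^{-1}(C))$ for measurable $C$. The key step is to show that $\mu_{swap(\sigma)}$ and $\mu_{swap(S)}$ coincide on $B(\zb,r)\times B(\zb,r)\times\mathbb{R}$: for any measurable $C\subset B(\zb,r)\times B(\zb,r)\times\mathbb{R}$, I would argue $T_\sigma^{-1}(C)=T_S^{-1}(C)$. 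If $(\xb,\tilde{\xb},y)\in T_\sigma^{-1}(C)$, the first $d$-block $\ab$ of $[\xb,\tilde{\xb}]_{swap(\sigma(\xb))}$ lies in $B(\zb,r)$, hence $\sigma(\ab)=S$; since $\ab$ agrees with $\xb$ off $\sigma(\xb)$, the local-swap property applied to the pair $(\xb,\ab)$ gives $\sigma(\xb)=\sigma(\ab)=S$, so $[\xb,\tilde{\xb}]_{swap(\sigma(\xb))}=[\xb,\tilde{\xb}]_{swap(S)}$ and $(\xb,\tilde{\xb},y)\in T_S^{-1}(C)$; the reverse inclusion is obtained in the same way. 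Integrating against $\mu$ then gives $\mu_{swap(\sigma)}(C)=\mu_{swap(S)}(C)$.

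To finish, I would chain the pieces: by $r$-locality applied to $\mu_{swap(\sigma)}$ and $\mu_{swap(S)}$ (which coincide on the relevant ball), $(\Tb_{\mu_{swap(\sigma)}}(\zb),\tilde{\Tb}_{\mu_{swap(\sigma)}}(\zb))=(\Tb_{\mu_{swap(S)}}(\zb),\tilde{\Tb}_{\mu_{swap(S)}}(\zb))$; by fixed-set equivariance, $\Phi(\mu_{swap(S)})=[\Phi(\mu)]_{swap(S)}$, so evaluating at $\zb$ gives $(\Tb_{\mu_{swap(S)}}(\zb),\tilde{\Tb}_{\mu_{swap(S)}}(\zb))=[(\Tb_{\mu}(\zb),\tilde{\Tb}_{\mu}(\zb))]_{swap(S)}$; and since $S=\sigma(\zb)$ this equals $[(\Tb_{\mu}(\zb),\tilde{\Tb}_{\mu}(\zb))]_{swap(\sigma(\zb))}$, which is the claimed identity.

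The hard part is the key step, i.e. verifying that the globally-swapped measure and the pointwise-swapped measure feed identical data into $\Phi$ at $\zb$. The delicate point is that $T_\sigma^{-1}(C)$ for a set $C$ contained in the ball may contain points $(\xb,\tilde{\xb},y)$ whose first block $\xb$ is far from $\zb$, so one cannot directly conclude $\sigma(\xb)=S$; it is precisely the local-swap property of $\sigma$, applied to the swapped-back vector $\ab$ (which does land in the ball), that forces $\sigma(\xb)=S$ on all of $T_\sigma^{-1}(C)$. Everything else is bookkeeping with the definitions, and the hypothesis that $A^r$ is non-empty only serves to make the statement non-vacuous --- one simply takes $\zb$ to be any of its elements.
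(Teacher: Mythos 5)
Your proposal is correct and follows essentially the same route as the paper: reduce the pointwise swap to the fixed set $S=\sigma(\zb)$, show that $\mu_{swap(\sigma)}$ and $\mu_{swap(S)}$ coincide on $B(\zb,r)\times B(\zb,r)\times\mathbb{R}$ (the paper does this by pulling the involution back onto the target point $(\ub,\vb)\in B(\zb,r)^2$, you do it by identifying preimages and using the local-swap property to force $\sigma(\xb)=S$ --- the same mechanism viewed from the other end), and then chain $r$-locality with the fixed-set equivariance of $\Phi$. Your preimage formulation is, if anything, a slightly more careful measure-theoretic rendering of the paper's density-level identity.
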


We prove this result in Section~\ref{proof:swap-scores}. Whenever $\mu = \mathcal{L}([\Xb, \tilde{\Xb}], Y)$ where $\tilde{\Xb}$ is a knockoff of $\Xb$, consider a local swap $\sigma$ such that  $\sigma \subset \mathcal{H}_0^0$. By proposition~\ref{extended-exchangeability}, we get that $\mu = \mu_{swap(\sigma)}$, and therefore $(\Tb_{\mu}, \tilde{\Tb}_{\mu}) = (\Tb_{\mu_{swap(\sigma)}}, \tilde{\Tb}_{\mu_{swap(\sigma)}})$. In practice, we can imagine that instead of using $\mu$ as an input when constructing importance scores, we use $\hat{\mu}_n$, an empirical measure defined by the dataset of $n$ i.i.d. samples from $\mu$ that we feed as input to the algorithm. Therefore a consequence of proposition~\ref{extended-exchangeability} will be that, taking also into account the eventual randomness when generating importance scores, we have for any $\zb \in \mathbb{R}^d$,
\[
(\Tb_{\hat{\mu}_{n}}(\zb), \tilde{\Tb}_{\hat{\mu}_n}(\zb)) \stackrel{d}{=} (\Tb_{\hat{\mu}_{n,swap(\sigma)}}(\zb), \tilde{\Tb}_{\hat{\mu}_{n,swap(\sigma)}}(\zb))
\]

We now combine this equality with that of Proposition~\ref{swap-scores}. Fix $r>0$ such that we have $r$-local importance scores. Consider a set of $L$ points $\zb_1, \dots \zb_N \in \mathbb{R}^d$ that are pairwise $2r$ far apart, that is, for any $1\!\leq \!l,l'\!\leq\! N$, \mbox{$\Vert \zb_l - \zb_{l'} \Vert_{\infty} \geq 2r$}.

\begin{proposition}
$\sigma^r$ is a local swap and if $\sigma \subset \mathcal{H}_0^0$, then $\sigma^r \subset \mathcal{H}_0^r$. Furthermore, 
\begin{align*}
\big[ \Tb_{\hat{\mu}_{n}}(\zb_l), &\tilde{\Tb}_{\hat{\mu}_n}(\zb_l) \big]_{1\leq l \leq L} 
\\ & \stackrel{d}{=} \big[ [\Tb_{\hat{\mu}_{n}}(\zb_l), \tilde{\Tb}_{\hat{\mu}_{n}}(\zb_l)]_{swap(\sigma^r(\zb_l))}\big]_{1\leq l \leq L}
\end{align*}
\end{proposition}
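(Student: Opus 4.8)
The plan is to treat the three assertions in turn; only the distributional identity needs real work. For "\emph{$\sigma^{r}$ is a local swap}" I would argue by cases on whether $\xb\in A^{r}$. If $\xb\notin A^{r}$ then $\sigma^{r}(\xb)=\emptyset$, so the hypothesis $\xb_{[d]\setminus\sigma^{r}(\xb)}=\zb_{[d]\setminus\sigma^{r}(\xb)}$ forces $\xb=\zb$ and there is nothing to check. If $\xb\in A^{r}$, set $S:=\sigma(\xb)=\sigma^{r}(\xb)$; then $\sigma\equiv S$ on $B(\xb,r)$, the hypothesis says $\zb$ agrees with $\xb$ off $S$, and since $\sigma$ is a local swap $\sigma(\zb)=S$. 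It remains to see $\zb\in A^{r}$: given $\yb\in B(\zb,r)$, let $\delta'$ be $\yb-\zb$ with the coordinates in $S$ zeroed out; then $\xb+\delta'\in B(\xb,r)$, so $\sigma(\xb+\delta')=S$, and $\yb$ agrees with $\xb+\delta'$ off $S$ (using $\zb=\xb$ off $S$), so $\sigma(\yb)=\sigma(\xb+\delta')=S$; hence $\sigma^{r}(\zb)=S=\sigma^{r}(\xb)$. The inclusion $\sigma^{r}\subset\mathcal{H}_0^{r}$ is then immediate: it is $\emptyset$ at $\xb\notin A^{r}$, and at $\xb\in A^{r}$ constancy of $\sigma$ on $B(\xb,r)$ together with $\sigma\subset\mathcal{H}_0^{0}$ gives $\sigma^{r}(\xb)\subset\mathcal{H}_0^{0}(\zb)$ for every $\zb\in B(\xb,r)$, so $\sigma^{r}(\xb)\subset\bigcap_{\zb\in B(\xb,r)}\mathcal{H}_0^{0}(\zb)=\mathcal{H}_0^{r}(\xb)$; in particular $\sigma^{r}\subset\mathcal{H}_0^{r}\subset\mathcal{H}_0^{0}$ by Proposition~\ref{prop:inclusion}.

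For the identity I would set $S_{l}:=\sigma^{r}(\zb_{l})$ and, in $\mathbb{R}^{d}\times\mathbb{R}^{d}$, the cells $R_{l}:=B(\zb_{l},r)\times B(\zb_{l},r)$, and then record two facts. (a) \emph{Coordinate-projection identity}: for any fixed $S\subset[d]$, $[\xb,\tilde{\xb}]_{swap(S)}\in R_{l}$ iff $(\xb,\tilde{\xb})\in R_{l}$, because swapping coordinates between the two blocks does not change whether each block lies in $B(\zb_{l},r)$; consequently the $R_{l}$ are pairwise disjoint up to a $\mu$-null set (by $2r$-separation the balls $B(\zb_{l},r)$ can meet only along a Lebesgue-null face, which is $\mu$-null since $\mu$ has a density), and each constant swap $S_{l}$ maps $R_{l}\times\mathbb{R}$ to itself. (b) \emph{Cell invariance}: $\mu$ restricted to $R_{l}\times\mathbb{R}$ is invariant under the constant swap $S_{l}$; this is trivial when $S_{l}=\emptyset$, and when $S_{l}\neq\emptyset$ we have $\zb_{l}\in A^{r}$ and $\sigma\equiv S_{l}$ on $B(\zb_{l},r)$, so Proposition~\ref{extended-exchangeability} (local exchangeability with $Y$, valid since $\sigma$ is a local swap with $\sigma\subset\mathcal{H}_0^{0}$) gives $\mu_{swap(\sigma)}=\mu$, and restricting this to $R_{l}\times\mathbb{R}$ — where $[\xb,\tilde{\xb}]_{swap(\sigma(\xb))}$ coincides with the fixed swap $[\xb,\tilde{\xb}]_{swap(S_{l})}$ and (by (a)) preserves $R_{l}\times\mathbb{R}$ — yields $(\mu|_{R_{l}\times\mathbb{R}})_{swap(S_{l})}=\mu|_{R_{l}\times\mathbb{R}}$.

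To conclude, I would introduce the cell-wise transformation $\tau$ on $\mathbb{R}^{d}\times\mathbb{R}^{d}\times\mathbb{R}$ defined by $\tau(\xb,\tilde{\xb},y)=([\xb,\tilde{\xb}]_{swap(S_{l})},y)$ when $(\xb,\tilde{\xb})\in R_{l}$ and $\tau=\mathrm{id}$ otherwise (well defined by disjointness). Combining (a) and (b), $\tau_{*}\mu=\mu$: on each $R_{l}\times\mathbb{R}$ this is (b), on the complement $\tau$ is the identity, and no positive $\mu$-mass is transported between distinct cells since that would force two $\zb_{l}$ to lie within $2r$ in every coordinate and exactly $2r$ in an extremal one, a $\mu$-null event. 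Hence the i.i.d.\ sample $(\Xb_{i},\tilde{\Xb}_{i},Y_{i})_{i\leq n}$ and its coordinatewise $\tau$-image have the same law, so $\hat{\mu}_{n}\stackrel{d}{=}\tau_{*}\hat{\mu}_{n}$ as random measures, and therefore (feeding the importance-score map its auxiliary randomness identically on both sides) $[\Tb_{\hat{\mu}_{n}}(\zb_{l}),\tilde{\Tb}_{\hat{\mu}_{n}}(\zb_{l})]_{l}\stackrel{d}{=}[\Tb_{\tau_{*}\hat{\mu}_{n}}(\zb_{l}),\tilde{\Tb}_{\tau_{*}\hat{\mu}_{n}}(\zb_{l})]_{l}$. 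Finally, $r$-locality makes $[\Tb_{\tau_{*}\hat{\mu}_{n}}(\zb_{l}),\tilde{\Tb}_{\tau_{*}\hat{\mu}_{n}}(\zb_{l})]$ depend only on $\tau_{*}\hat{\mu}_{n}|_{R_{l}\times\mathbb{R}}$, which a.s.\ equals $(\hat{\mu}_{n}|_{R_{l}\times\mathbb{R}})_{swap(S_{l})}$, so by swap-equivariance of $\Phi$ with the constant swap $S_{l}$ it equals $[\Tb_{\hat{\mu}_{n}}(\zb_{l}),\tilde{\Tb}_{\hat{\mu}_{n}}(\zb_{l})]_{swap(S_{l})}$; substituting gives the claim. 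For indices $l$ with $\zb_{l}\in A^{r}$ this last step is precisely Proposition~\ref{swap-scores}.

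The main obstacle — and the reason one needs both the truncation $\sigma^{r}$ and the $2r$-separation — is that "swap $S_{l}$ near $\zb_{l}$, for all $l$ at once" cannot in general be realized by a single global local swap: a mapping equal to a nonempty $S_{l}$ on $B(\zb_{l},r)$ need not satisfy the local-swap condition, and $\sigma$ itself need not be constant on all of $B(\zb_{l},r)$ (exactly why $\sigma^{r}(\zb_{l})$ is declared $\emptyset$ there). Pushing the swap out of swap-space into the joint sample space via $\tau$ sidesteps this, and the real content then condenses to $\tau_{*}\mu=\mu$, which rests on disjointness of the cells, the coordinate-projection identity, the restricted local exchangeability $\mu_{swap(\sigma)}=\mu$, and the fact — using the density of $\mu$ and that the balls overlap only on a null set — that $\tau$ never moves positive mass between distinct cells.
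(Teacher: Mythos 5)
Your proof is correct, and the first two claims are established exactly as in the paper: the case split on whether $\xb\in A^r$, the auxiliary point $\xb+\delta'$ (the paper uses the translate $\yb-(\zb-\xb)$, which agrees with yours off $S$ and plays the same role), and the intersection argument giving $\sigma^r\subset\mathcal{H}_0^r$ all coincide with the paper's proof of Proposition~\ref{sigma-r}. Where you genuinely diverge is the distributional identity. The paper disposes of it in one line, as a ``concatenation'' of Proposition~\ref{swap-scores} with the invariance $\mu=\mu_{swap(\sigma^r)}$ supplied by Proposition~\ref{extended-exchangeability} (legitimate, since the first two claims show $\sigma^r$ is a local swap contained in $\mathcal{H}_0^0$), leaving implicit both the trivial case $\zb_l\notin A^r$ and the fact that the equality must hold \emph{jointly} over the $L$ points. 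You instead construct the cell-wise involution $\tau$ on the sample space, prove $\tau_*\mu=\mu$ by restricting local exchangeability to the cells $R_l=B(\zb_l,r)\times B(\zb_l,r)$, push the i.i.d.\ sample (hence $\hat{\mu}_n$) through $\tau$, and only then invoke $r$-locality and the swap-equivariance of $\Phi$ for the constant swaps $S_l$. This buys an explicit treatment of the joint law across $l$ --- the $2r$-separation enters precisely as the disjointness that makes the $L$ constant swaps assemble into a single measure-preserving map --- and a clean handling of points with $\sigma^r(\zb_l)=\emptyset$; the paper's route is shorter only because Propositions~\ref{extended-exchangeability} and~\ref{swap-scores} already package the same content. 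One small caveat, shared with the paper: with closed sup-norm balls at separation exactly $2r$ the cells can meet along a face, and your argument needs that face to be $\mu$-null; you justify this via a Lebesgue density, whereas the paper only assumes a density with respect to a base product measure (which may be discrete, as in the HMM/SNP example), so strictly one should either take strict separation or assume the boundary faces carry no mass.
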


We prove this proposition in Section~\ref{proof-sigma-r}. This allows us to conclude. Fix a target FDR level $q \in (0,1)$. Indeed, Proposition~\ref{sigma-r} directly implies the flip-sign condition of Lemma 3.3 in \cite{KN2}. Independently for each $\zb_l$, consider an independent random variable $\epsilon_l = (\epsilon_{l,1}, \dots, \epsilon_{l,d})$, where for each $1\leq l \leq L$, and $1\leq j \leq d$ we have $\epsilon^l_j = 1$ if $j\notin \sigma^r(\zb_l)$, and a Rademacher random variable if $j \in \sigma^r(\zb_l)$. Then denoting $\sigma^r_{\epsilon}(\zb_l) = \sigma_r(\zb_l) \cap \{j: \epsilon_{l,j} = -1\}$, we have: 

\begin{align*}
(\Tb_{\hat{\mu}_{n}}(\zb_l), &\tilde{\Tb}_{\hat{\mu}_n}(\zb_l) )
\\ & \stackrel{d}{=} [\Tb_{\hat{\mu}_{n}}(\zb_l), \tilde{\Tb}_{\hat{\mu}_{n}}(\zb_l)]_{swap(\sigma_{\epsilon}^r(\zb_l))}
\end{align*}
As a consequence, denoting
\[
\Wb_{\hat{\mu}_n}(\zb_l) = \Tb_{\hat{\mu}_{n}}(\zb_l) - \tilde{\Tb}_{\hat{\mu}_n}(\zb_l)
\]
we get that 
\[
\epsilon_l \odot \Wb_{\hat{\mu}_n}(\zb_l)  \stackrel{d}{=}  \Wb_{\hat{\mu}_n}(\zb_l) 
\]
where the symbol $\odot$ indicates component-wise multiplication. Now, given that \mbox{$\Vert \zb_l - \zb_{l'} \Vert_{\infty} \geq 2r$}, this equality holds uniformly for $1\leq l \leq L$. The random choice of the swap $\sigma^r_{\epsilon}(\zb_l)$ is done independently of a random swap $\sigma^r_{\epsilon}(\zb_{l'})$ at another point $\zb_{l'}$. We conclude that the knockoff selection procedure now applies to each of these vectors in an independent way: that is, for each $1\leq l \leq L$, setting 
\[
\hat{\tau}_l = \min \Big\{ t > 0 : \frac{1+ \#\{j:  [\Wb_{\hat{\mu}_n}(\zb_l)]_j  \leq -t\}}{\#\{j: [\Wb_{\hat{\mu}_n}(\zb_l)]_j\}} \leq q\Big\}
\]
allows to construct selection sets $\Hat{\mathcal{S}}_l = \{j: [\Wb_{\hat{\mu}_n}(\zb_l)]_j \geq \hat{\tau}_l\}$, that control $\text{FDR}^r$ given that initially $\sigma^r(\zb) \subset \mathcal{H}_0^r(\zb)$.

That is, according to Theorem 3.4 in \cite{KN2}, the set $\Hat{\mathcal{S}}_l$ is such that  
\[
\mathbb{E}[\frac{|\hat{\mathcal{S}}_l\cap \mathcal{H}_0^r(\xb)|}{1\vee | \hat{\mathcal{S}}_l|}] \leq q
\]
hence the result.

\section{Proofs}

\subsection{Proof of Proposition~\ref{extended-exchangeability}}\label{proof-extended-exchangeability}

\begin{proof}
We begin the proof with two lemmas:

\begin{lemma}\label{swap-decomposition}
We can decompose a local swap $\sigma : \mathbb{R}^d \rightarrow \mathcal{P}([d])$ into $\sigma_i :\mathbb{R}^d \rightarrow \mathcal{P}([d])$ such that for every $i \in [d]$:
\begin{equation*}
    \begin{cases}
    Im(\sigma_i) = \{\emptyset, \{i\}\} \\
    \sigma(\xb) = \bigsqcup_{i=1}^d \sigma_i(\xb) , \; \forall \xb \in \mathbb{R}^d \\
    \sigma_i \quad \text{is a local swap}
    \end{cases}
\end{equation*}
We will denote by $\sigma = \bigsqcup_{i=1}^d \sigma_i$ the property $\sigma(\xb) = \bigsqcup_{i=1}^d \sigma_i(\xb) , \; \forall \xb \in \mathbb{R}^d$.
\end{lemma}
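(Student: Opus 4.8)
The plan is to take the obvious coordinatewise decomposition and then check that locality is inherited from $\sigma$. Concretely, I would define for each $i \in [d]$ the map
\[
\sigma_i : \xb \mapsto \begin{cases} \{i\} & \text{if } i \in \sigma(\xb) \\ \emptyset & \text{if } i \notin \sigma(\xb). \end{cases}
\]
With this definition $Im(\sigma_i) \subseteq \{\emptyset, \{i\}\}$ is immediate, and for every $\xb$ the sets $\sigma_1(\xb), \dots, \sigma_d(\xb)$ are pairwise disjoint (each is empty or the singleton $\{i\}$ with distinct $i$) and their union is $\{\, i \in [d] : i \in \sigma(\xb)\,\} = \sigma(\xb)$; this is exactly $\sigma = \bigsqcup_{i=1}^d \sigma_i$, so the first two bullet conditions hold.

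The substantive step is verifying that each $\sigma_i$ is a local swap, which I would do by a short two-case argument. Fix $i$ and $\xb, \zb$ with $\xb_{[d]\setminus \sigma_i(\xb)} = \zb_{[d]\setminus \sigma_i(\xb)}$. If $i \notin \sigma(\xb)$, then $\sigma_i(\xb) = \emptyset$, so the hypothesis reads $\xb = \zb$ and trivially $\sigma_i(\xb) = \sigma_i(\zb)$. If $i \in \sigma(\xb)$, then $\sigma_i(\xb) = \{i\}$ and the hypothesis says $\xb$ and $\zb$ agree on every coordinate except possibly the $i$-th. Since $i \in \sigma(\xb)$ we have the inclusion $[d]\setminus\sigma(\xb) \subseteq [d]\setminus\{i\}$, so $\xb$ and $\zb$ in particular agree on $[d]\setminus\sigma(\xb)$; invoking that $\sigma$ is a local swap then gives $\sigma(\xb) = \sigma(\zb)$, hence $i \in \sigma(\zb)$ and $\sigma_i(\zb) = \{i\} = \sigma_i(\xb)$, as required.

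I do not expect a genuine obstacle here: the only point that requires care is the direction of the set inclusion $[d]\setminus\sigma(\xb) \subseteq [d]\setminus\{i\}$ in the second case, which is precisely what allows the weaker ``agree off coordinate $i$'' hypothesis available for $\sigma_i$ to be upgraded to the ``agree off $\sigma(\xb)$'' hypothesis needed to apply the locality of $\sigma$. The disjointness and union identities are pure bookkeeping, and this decomposition is the tool that will later let us reduce statements about a general local swap to single-coordinate (Rademacher-type) swaps.
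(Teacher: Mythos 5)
Your proposal is correct and follows essentially the same argument as the paper: the identical coordinatewise definition of $\sigma_i$, the trivial verification of the image and disjoint-union conditions, and the same two-case locality check in which the case $i \in \sigma(\xb)$ is handled by upgrading agreement off coordinate $i$ to agreement off $\sigma(\xb)$ via $[d]\setminus\sigma(\xb) \subseteq [d]\setminus\{i\}$. You in fact spell out that inclusion step slightly more explicitly than the paper does; there is no gap.
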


\begin{proof}
Define, for every $i \in [d]$, 
\begin{equation*}
\sigma_i(x) = 
\begin{cases} 
\,\emptyset  & \text{if} \: i\notin \sigma(x) \\
\{i\}    &\text{if} \: i \in \sigma(x)
\end{cases}
\end{equation*}

We need to show that $\sigma_i$ is a local swap. Let $\xb, \zb \in \mathbb{R}^d$ such that $\xb_{[d]\setminus \sigma_i(\xb)} = \zb_{[d]\setminus \sigma_i(\xb)}$. If $\sigma_i(\xb) = \emptyset$ then $\xb = \zb$ and therefore $\sigma_i(\xb) = \sigma_i(\zb)$. If $\sigma_i(\xb) = \{i\}$, then $i \in \sigma(\xb)$, so  $\xb_{[d]\setminus \sigma_i(\xb)} = \zb_{[d]\setminus \sigma_i(\xb)}$ implies $\xb_{[d]\setminus \sigma(\xb)} = \zb_{[d]\setminus \sigma(\xb)}$ and therefore $\sigma(\xb) = \sigma(\zb)$ given that $\sigma$ is a local swap. But then we have $\sigma_i(\xb) = \sigma_i(\zb)$ by definition of $\sigma_i$.
\end{proof}

\begin{lemma}\label{swap-iteration}
Assume that we have a partition of a local swap $\sigma$ into two local swaps $\sigma^a, \sigma^b$: $\sigma = \sigma^a \bigsqcup \sigma^b$. Then we have :

\[
[\Fb^{Id}\!, \tilde{\Fb}^{Id}]_{swap(\!\sigma\!)} \! = \! [\Fb^{Id}\!, \tilde{\Fb}^{Id}]_{swap(\!\sigma^b\!)} \circ [\Fb^{Id}\!, \tilde{\Fb}^{Id}]_{swap(\!\sigma^a\!)}
\]
\end{lemma}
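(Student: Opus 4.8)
The plan is to verify the claimed identity of mappings by unfolding both sides at an arbitrary point $(\xb,\tilde{\xb})\in\mathbb{R}^d\times\mathbb{R}^d$ and checking agreement coordinate by coordinate. Write $G=[\Fb^{Id},\tilde{\Fb}^{Id}]_{swap(\sigma^a)}$ and $H=[\Fb^{Id},\tilde{\Fb}^{Id}]_{swap(\sigma^b)}$. By the definition of the swap on the identity map, $(\ub,\tilde{\ub}):=G(\xb,\tilde{\xb})=[(\xb,\tilde{\xb})]_{swap(\sigma^a(\xb))}$, so that $u_j=\tilde{x}_j,\ \tilde{u}_j=x_j$ for $j\in\sigma^a(\xb)$ and $u_j=x_j,\ \tilde{u}_j=\tilde{x}_j$ otherwise. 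Then $(H\circ G)(\xb,\tilde{\xb})=H(\ub,\tilde{\ub})=[(\ub,\tilde{\ub})]_{swap(\sigma^b(\ub))}$, and the whole proof reduces to identifying the index set $\sigma^b(\ub)$ used by the second swap.

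The key step — and the one I expect to be the only real obstacle — is the claim $\sigma^b(\ub)=\sigma^b(\xb)$. Here I would use the hypotheses that $\sigma$ and $\sigma^a$ are local swaps, together with $\sigma=\sigma^a\sqcup\sigma^b$. By construction $\ub$ and $\xb$ agree on $[d]\setminus\sigma^a(\xb)$; since $\sigma^a\subset\sigma$ this set contains $[d]\setminus\sigma(\xb)$, so $\ub$ and $\xb$ also agree on $[d]\setminus\sigma(\xb)$. Applying the local-swap property of $\sigma^a$ gives $\sigma^a(\ub)=\sigma^a(\xb)$, and applying it to $\sigma$ gives $\sigma(\ub)=\sigma(\xb)$. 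Since $\sigma(\yb)=\sigma^a(\yb)\sqcup\sigma^b(\yb)$ for every $\yb$, i.e.\ $\sigma^b(\yb)=\sigma(\yb)\setminus\sigma^a(\yb)$, we conclude $\sigma^b(\ub)=\sigma(\ub)\setminus\sigma^a(\ub)=\sigma(\xb)\setminus\sigma^a(\xb)=\sigma^b(\xb)$. It is worth emphasizing that this genuinely needs $\sigma$ itself to be a local swap: trying to invoke the local-swap property of $\sigma^b$ directly would require $\ub$ and $\xb$ to agree on all of $[d]\setminus\sigma^b(\xb)\supseteq\sigma^a(\xb)$, which is exactly the block where they may differ.

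Once $\sigma^b(\ub)=\sigma^b(\xb)$ is in hand, the rest is a short bookkeeping check over the three disjoint blocks $\sigma^a(\xb)$, $\sigma^b(\xb)$, and $[d]\setminus\sigma(\xb)$. For $j\in\sigma^a(\xb)$ the first swap has already placed $\tilde{x}_j$ in the first slot and $x_j$ in the second, and the second swap leaves $j$ untouched since $j\notin\sigma^b(\xb)=\sigma^b(\ub)$; for $j\in\sigma^b(\xb)$ the first swap does nothing ($u_j=x_j,\ \tilde{u}_j=\tilde{x}_j$) and the second swap puts $\tilde{x}_j$ in the first slot and $x_j$ in the second; for $j\notin\sigma(\xb)$ neither swap touches $j$. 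In every case the output at coordinate $j$ coincides with that of $[(\xb,\tilde{\xb})]_{swap(\sigma(\xb))}=[\Fb^{Id},\tilde{\Fb}^{Id}]_{swap(\sigma)}(\xb,\tilde{\xb})$, and since $(\xb,\tilde{\xb})$ was arbitrary this proves the claimed identity. Interchanging the roles of $\sigma^a$ and $\sigma^b$ in the same argument shows the composition may be taken in either order, recovering the earlier remark that swap operations commute.
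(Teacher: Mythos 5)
Your proof is correct and takes essentially the same route as the paper: both reduce the lemma to showing that $\sigma^b$ takes the same value at $\xb$ and at the first $d$ coordinates of the $\sigma^a$-swapped vector, deducing this from the locality of $\sigma$ and of $\sigma^a$ together with $\sigma^b(\cdot)=\sigma(\cdot)\setminus\sigma^a(\cdot)$. The only difference is that you make explicit the final coordinate-by-coordinate bookkeeping, which the paper leaves implicit.
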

\begin{proof}
It is crucial to notice that, given our previous swap definition for an identity mapping, the $swap(\sigma^b)$ operator applies to the output of the swapped vector by $\sigma^a$. In order to prove the result we need to show that $\sigma^b(\xb) =\sigma^b(\zb)$ where $\zb$ is the vector of the first $d$ coordinates of $[\xb,\tilde{\xb}]_{swap(\sigma^a(\xb))}$. Given that, for any $\xb \in \mathbb{R}^d$ we have $\xb_{[d]\setminus \sigma^a(\xb)} = \zb_{[d]\setminus \sigma^a(\xb)}$ and $\sigma^a(\xb) \subset \sigma(\xb)$, we get $\sigma(\xb) = \sigma(\zb)$ and same equality with $\sigma^a$, as both $\sigma, \sigma^a$ are local swaps. That implies $\sigma^b(\xb) = \sigma^b(\zb)$ as $\sigma = \sigma^a \bigsqcup \sigma^b$. Notice that the order does not matter when composing the two swapped identity mappings.
\end{proof}

In order to prove equation~\ref{general-exchangeability}, we write it in terms of mappings: we want to show that if $(\Xb, \tilde{\Xb})$ satisfy exchangeability, then
\begin{align*}
[\Xb,\tilde{\Xb}]_{swap(\sigma)} & = [\Fb^{Id},  \tilde{\Fb}^{Id}]_{swap(\sigma)}(\Xb,\tilde{\Xb}) 
\\ &\stackrel{d}{=} [\Fb^{Id}, \tilde{\Fb}^{Id}](\Xb,\tilde{\Xb}) = [\Xb, \tilde{\Xb}]
\end{align*}
With Lemma~\ref{swap-decomposition} we decompose $\sigma = \bigsqcup_{i=1}^d \sigma_i$, and by recursively using Lemma~\ref{swap-iteration} we get that 
\begin{align*}
[\Fb^{Id}, \tilde{\Fb}^{Id}]_{swap(\sigma)}  = [\Fb^{Id}, \tilde{\Fb}^{Id}]_{swap(\sigma_1)} &\circ \dots 
\\ \dots \circ  [\Fb^{Id},& \tilde{\Fb}^{Id}]_{swap(\sigma_d)}
\end{align*}
It then suffices to show the equality in distribution for just one swap operation, so that we can recursively apply the swapped identity mappings while keeping the equality in distribution. We then need to prove that :
\[
[\Fb^{Id}, \tilde{\Fb}^{Id}]_{swap(\sigma_1)}(\Xb,\tilde{\Xb})  \stackrel{d}{=} [\Fb^{Id}, \tilde{\Fb}^{Id}](\Xb,\tilde{\Xb}) 
\]
Equivalently, if we condition on $\Xb_{-1}, \tilde{\Xb}_{-1}$ we need to show that
\begin{align*}
[\Fb^{Id}, \tilde{\Fb}^{Id}]_{swap(\sigma_1)}(\Xb&,\tilde{\Xb})|\Xb_{-1},\tilde{\Xb}_{-1}
\\ &\stackrel{d}{=} [\Fb^{Id}, \tilde{\Fb}^{Id}](\Xb,\tilde{\Xb})  |\Xb_{-1},\tilde{\Xb}_{-1} 
\end{align*}
Here crucially we use the fact that $\sigma_1$ is a local swap. Indeed, whenever we condition on $\Xb_{-1}, \tilde{\Xb}_{-1}$, the input values to the mapping $[\Fb^{Id}, \tilde{\Fb}^{Id}]_{swap(\sigma_1)}$ can be seen as constant with respect to $\Xb_{-1}, \tilde{\Xb}_{-1}$. Given that $\sigma_1$ can only be equal to $\emptyset$ or $\{1\}$, and that therefore its value is determined by $\Xb_{-1}$, hence constant when we condition on $\Xb_{-1}$, we get that either 
\begin{align*}
[\Fb^{Id}, \tilde{\Fb}^{Id}]_{swap(\sigma_1)}(\Xb &,\tilde{\Xb})|\Xb_{-1},\tilde{\Xb}_{-1} 
\\ & = [\Fb^{Id}, \tilde{\Fb}^{Id}](\Xb,\tilde{\Xb}) |\Xb_{-1},\tilde{\Xb}_{-1} 
\end{align*}
which therefore holds also in distribution or 
\begin{align*}
[\Fb^{Id}, \tilde{\Fb}^{Id}&]_{swap(\sigma_1)}(\Xb  ,\tilde{\Xb})|\Xb_{-1},\tilde{\Xb}_{-1} \\
 &
 = [\Fb^{Id}, \tilde{\Fb}^{Id}]_{swap(\{1\})}(\Xb,\tilde{\Xb})|\Xb_{-1},\tilde{\Xb}_{-1}
\end{align*}
In that case, it simplifies into 
\[
X_1,\tilde{X}_1 |\Xb_{-1},\tilde{\Xb}_{-1} \stackrel{d}{=} \tilde{X}_1, X_1 |\Xb_{-1},\tilde{\Xb}_{-1} 
\]
which is a consequence of the fact that $\Xb, \tilde{\Xb}$ satisfy exchangeability, hence the result.

To prove \mbox{$[\Xb,\tilde{\Xb}]_{swap(\sigma)}, Y \stackrel{d}{=} [\Xb,\tilde{\Xb}], Y$}, we assume that for every $\xb \in \mathbb{R}^d$, $\sigma(\xb) \subset \mathcal{H}_0^0(\xb)$. Jointly taking $Y$ with $(\Xb, \tilde{\Xb})$, the proof is the same up to proving that the following holds whenever $1 \in \mathcal{H}_0^0(\Xb)$: 
\[
X_1,\tilde{X}_1, Y |\Xb_{-1},\tilde{\Xb}_{-1} \stackrel{d}{=} \tilde{X}_1, X_1, Y |\Xb_{-1},\tilde{\Xb}_{-1} 
\]
By the properties of $\mathcal{H}_0^0$, $1 \in \mathcal{H}_0^0(\Xb)$ holds regardless of the value of $X_1$ when conditioning on $\Xb_{-1}$. Now if we write down the densities (as we assumed that the joint distribution has a positive density with respect to a product measure):
\begin{align*}
    p(\xb, \tilde{\xb}&,y) = p(y|\xb, \tilde{\xb})p(\xb, \tilde{\xb})
    \\ &= p(y|\xb) p(\xb, \tilde{\xb}) \quad \text{as}\quad \tilde{\Xb} \independent Y |\Xb
    \\ & =p(y|\xb_{-1}) p(\xb, \tilde{\xb}) \quad \text{as} \quad X_1 \independent Y |\Xb_{-1} = \xb_{-1}
    \\ & = p(y|\xb_{-1}) p([\xb, \tilde{\xb}]_{swap(\!\{\!1\!\}\!)}) \quad \text{by exchangeability}
    \\ & = p([\xb, \tilde{\xb}]_{swap(\{1\})},y)
\end{align*}

Hence the result.
\end{proof}

\subsection{Proof of Proposition~\ref{swap-scores}}\label{proof:swap-scores}

\begin{proof}
Fix $\sigma$ local swap, $r>0$, and $\zb \in A^{r}$. Let $S := \sigma(\zb)$, by definition of local importance scores we have that 
\[
[(\Tb_{\mu}(\cdot), \tilde{\Tb}_{\mu}(\cdot))]_{swap(S)} = [(\Tb_{\mu_{swap(S)}}(\cdot), \tilde{\Tb}_{\mu_{swap(S)}}(\cdot))]
\]
By definition of $r$-local importance scores, the value of $[\Tb_{\mu_{swap(\sigma)}}(\zb), \tilde{\Tb}_{\mu_{swap(\sigma)}}(\zb)]$ depends on $\mu_{swap(\sigma)}$ only through $B(\zb,r) \times B(\zb,r)\times \mathbb{R}$. Therefore if $\mu_{swap(\sigma)}$ and $\mu_{swap(S)}$ coincide on $B(\zb,r) \times B(\zb,r)\times \mathbb{R}$, then we have
\begin{align*}
[\Tb_{\mu_{swap(\sigma)}}(\zb), & \tilde{\Tb}_{\mu_{swap(\sigma)}}(\zb)] 
\\ & = [\Tb_{\mu_{swap(S)}}(\zb), \tilde{\Tb}_{\mu_{swap(S)}}(\zb)]
\\ & = [(\Tb_{\mu}(\zb), \tilde{\Tb}_{\mu}(\zb))]_{swap(\sigma(\zb))}
\end{align*}
Given that $\zb \in A^{r}$, we have that $\forall \ub, \vb \in B(\zb, r)$, $\sigma(\ub) =  S$. We want to show that, for any $y \in \mathbb{R}$, 
\begin{align*}
\mu_{swap(\sigma)}&(\ub, \vb,y)  = \mu_{swap(S)}(\ub,\vb,y)
\\ \Leftrightarrow \mathbb{P}\big( [\Xb,&\tilde{\Xb}]_{swap(\sigma)},Y = \ub,\vb,y\big) 
\\ & =  \mathbb{P}\big( [\Xb,\tilde{\Xb}]_{swap(S)},Y = \ub,\vb,y \big)
\\ \Leftrightarrow \mathbb{P}\big( [\Fb^{Id},&\tilde{\Fb}^{Id}]_{swap(\sigma)}(\Xb,\tilde{\Xb}),Y = \ub,\vb,y\big) 
\\ & = \! \mathbb{P}\big( [\Fb^{Id},\tilde{\Fb}^{Id}]_{swap(S)}(\Xb,\tilde{\Xb}),Y = \ub,\vb,y \big)
\\  \Leftrightarrow \mathbb{P}\big( \Xb,&\tilde{\Xb},Y = [\Fb^{Id},\tilde{\Fb}^{Id}]_{swap(\sigma)}(\ub,\vb),y\big) 
\\ & = \! \mathbb{P}\big( \Xb,\tilde{\Xb},Y = [\Fb^{Id},\tilde{\Fb}^{Id}]_{swap(S)}(\ub,\vb),y \big)
\\  \Leftrightarrow \mathbb{P}\big( \Xb &,\tilde{\Xb},Y = [\ub,\vb]_{swap(\sigma(\ub))},y\big) 
\\ & =  \mathbb{P}\big( \Xb,\tilde{\Xb},Y = [\ub,\vb]_{swap(S)},y \big)
\end{align*}
Hence the result.
\end{proof}

\subsection{Extended Flip-Sign Property for Local Swaps and Local Importance Scores}\label{proof-sigma-r}
\begin{proposition}\label{sigma-r}
$\sigma^r$ is a local swap and if $\sigma \subset \mathcal{H}_0^0$, then $\sigma^r \subset \mathcal{H}_0^r$. Furthermore, 
\begin{align*}
\big[ \Tb_{\hat{\mu}_{n}}(\zb_l), &\tilde{\Tb}_{\hat{\mu}_n}(\zb_l) \big]_{1\leq l \leq L} 
\\ & \stackrel{d}{=} \big[ [\Tb_{\hat{\mu}_{n}}(\zb_l), \tilde{\Tb}_{\hat{\mu}_{n}}(\zb_l)]_{swap(\sigma^r(\zb_l))}\big]_{1\leq l \leq L}
\end{align*}
\end{proposition}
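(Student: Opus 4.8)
The plan is to first dispatch the two combinatorial claims and then the distributional identity, the latter by reducing the global, data-dependent swap to $L$ independent constant swaps using the $r$-locality of the importance scores and the $2r$ spacing.

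For ``$\sigma^r$ is a local swap'': if $\zb\notin A^r$ then $\sigma^r(\zb)=\emptyset$ and the defining implication is vacuous; if $\zb\in A^r$, set $S:=\sigma(\zb)=\sigma^r(\zb)$ and suppose $\zb'$ agrees with $\zb$ off $S$. Then $\sigma(\zb')=S$ since $\sigma$ is a local swap, and I claim $\zb'\in A^r$: for $\wb\in B(\zb',r)$, replace the $S$-coordinates of $\wb$ by those of $\zb$ to obtain $\wb^{\star}\in B(\zb,r)$ (off $S$ it coincides with $\wb$, and $\zb,\zb'$ agree there, so it stays within $r$ of $\zb$; on $S$ it equals $\zb$), so $\sigma(\wb^{\star})=S$ because $\zb\in A^r$; as $\wb^{\star}$ agrees with $\wb$ off $S=\sigma(\wb^{\star})$, locality of $\sigma$ gives $\sigma(\wb)=S$. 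Hence $\sigma\equiv S$ on $B(\zb',r)$, i.e.\ $\zb'\in A^r$ and $\sigma^r(\zb')=S=\sigma^r(\zb)$. For ``$\sigma\subset\mathcal{H}_0^0\Rightarrow\sigma^r\subset\mathcal{H}_0^r$'': the inclusion is vacuous off $A^r$, and for $\zb\in A^r$ we have $\sigma^r(\zb)=\sigma(\zb)=\sigma(\yb)\subset\mathcal{H}_0^0(\yb)$ for every $\yb\in B(\zb,r)$, so $\sigma^r(\zb)\subset\bigcap_{\yb\in B(\zb,r)}\mathcal{H}_0^0(\yb)=\mathcal{H}_0^r(\zb)$; since $\mathcal{H}_0^r(\zb)\subset\mathcal{H}_0^0(\zb)$ this also yields $\sigma^r\subset\mathcal{H}_0^0$, which is needed below.

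For the distributional identity I would work with a single measure transform rather than the global swap $\sigma^r$. Because the $\zb_l$ are pairwise $2r$ apart, the regions $R_l:=B(\zb_l,r)\times B(\zb_l,r)\times\mathbb{R}$ are pairwise disjoint; set $S_l:=\sigma^r(\zb_l)$ and define, for a law $\mu$ on $\mathbb{R}^d\times\mathbb{R}^d\times\mathbb{R}$, the measure $\psi(\mu)$ that equals $\mu$ pushed forward by the constant swap of $S_l$ on each $R_l$ and equals $\mu$ off $\bigcup_l R_l$. This is a genuine probability measure, since the constant swap of $S_l$ maps $R_l$ bijectively onto itself (it only exchanges which of $x_j,\tilde x_j$ sits where, and both stay in $[\zb_{l,j}-r,\zb_{l,j}+r]$). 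By $r$-locality $\Phi(\nu)(\zb_l)$ depends on $\nu$ only through $\nu|_{R_l}$, so together with the constant-swap equivariance $\Phi(\nu_{swap(S_l)})=[\Phi(\nu)]_{swap(S_l)}$ we get, for every $\mu$,
\[
\big(\Tb_{\psi(\mu)}(\zb_l),\tilde\Tb_{\psi(\mu)}(\zb_l)\big)=\big(\Tb_{\mu}(\zb_l),\tilde\Tb_{\mu}(\zb_l)\big)_{swap(\sigma^r(\zb_l))},\qquad 1\le l\le L .
\]
It then remains to prove $\psi(\hat\mu_n)\stackrel{d}{=}\hat\mu_n$: once that is established, $\Phi(\psi(\hat\mu_n))\stackrel{d}{=}\Phi(\hat\mu_n)$ (using measurability of $\Phi$ and independence of its auxiliary randomization from the data), and evaluating both sides at $\zb_1,\dots,\zb_L$ and combining with the display gives exactly the asserted equality in distribution.

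For $\psi(\hat\mu_n)\stackrel{d}{=}\hat\mu_n$ it suffices, the $n$ samples being i.i.d., to check that the map $g$ sending $(\xb,\tilde\xb,y)$ to $([\xb,\tilde\xb]_{swap(S_l)},y)$ when $(\xb,\tilde\xb)\in B(\zb_l,r)\times B(\zb_l,r)$ and to $(\xb,\tilde\xb,y)$ otherwise preserves $\mu=\mathcal{L}([\Xb,\tilde\Xb],Y)$. Off the $R_l$ this is clear; on $R_l$ it reduces to $p([\xb,\tilde\xb]_{swap(S_l)},y)=p(\xb,\tilde\xb,y)$ for $\xb,\tilde\xb\in B(\zb_l,r)$, which holds because $S_l=\sigma^r(\zb_l)\subset\mathcal{H}_0^r(\zb_l)=\bigcap_{\wb\in B(\zb_l,r)}\mathcal{H}_0^0(\wb)$ (the second claim), so every $j\in S_l$ is a local null at every point of $B(\zb_l,r)$, and the single-coordinate density identity from the proof of Proposition~\ref{extended-exchangeability} (knockoff conditional independence plus exchangeability, with $\mathcal{H}_0^0$ invariant along null coordinates by Proposition~\ref{prop:inclusion}) may then be iterated over $j\in S_l$. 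The delicate point — the one I would be most careful about — is precisely this restricted-exchangeability step: one needs the $2r$ spacing so that the $R_l$ are disjoint (hence the per-slab swaps do not interfere and $\psi(\mu)$ is a probability measure), and one needs $r$-locality to confine each score evaluation to a region on which the whole swapped set $S_l$ consists of local nulls. An alternative route — swap $\mu$ by the global local swap $\sigma^r$ (legitimate by Proposition~\ref{extended-exchangeability}, using the first two claims) and apply Proposition~\ref{swap-scores} at each $\zb_l$ — also works but requires verifying that $\sigma^r$ is $r$-locally constant at each $\zb_l$, a slightly heavier bookkeeping of the radii than the slab-wise argument above.
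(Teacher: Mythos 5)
Your proof is correct; the first two claims match the paper's argument and the third takes a genuinely different route. For the locality of $\sigma^r$ you repair $\wb\in B(\zb',r)$ by overwriting its $S$-coordinates with those of $\zb$, where the paper instead translates $\wb$ by a vector supported on $S$; these are interchangeable. (One wording quibble: when $\sigma^r(\xb)=\emptyset$ the defining implication is not vacuous --- its hypothesis forces $\xb=\zb$, which is exactly how the conclusion follows.) For the distributional identity the paper simply concatenates Proposition~\ref{extended-exchangeability} (giving $\hat\mu_{n,swap(\sigma)}\stackrel{d}{=}\hat\mu_n$) with Proposition~\ref{swap-scores} applied to the \emph{original} swap $\sigma$ at each $\zb_l$: at points of $A^r$ that proposition converts the measure swap into a swap of the scores by $\sigma(\zb_l)=\sigma^r(\zb_l)$, and elsewhere $\sigma^r(\zb_l)=\emptyset$ so there is nothing to prove --- hence the extra bookkeeping you fear in your ``alternative route'' disappears once Proposition~\ref{swap-scores} is invoked with $\sigma$ rather than with $\sigma^r$. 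Your main argument instead bypasses both propositions: you push the law forward by the constant swap $S_l$ separately on each region $R_l$ (pairwise disjoint, up to a null boundary, by the $2r$ separation), verify invariance of $\mathcal{L}([\Xb,\tilde{\Xb}],Y)$ by iterating the single-coordinate density identity over $j\in S_l$ (valid since every intermediate point remains in $B(\zb_l,r)$, where all of $S_l$ are local nulls by your second claim), and recover the swapped scores from $r$-locality plus constant-swap equivariance. This is more self-contained, makes the roles of the $2r$ spacing and of $r$-locality completely explicit, and fills in precisely the step the paper leaves as a terse ``concatenation''; the cost is re-deriving a special case of the exchangeability machinery that Propositions~\ref{extended-exchangeability} and~\ref{swap-scores} already package. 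Both routes are sound.
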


\begin{proof}
Let $\xb, \zb$ such that $\xb_{[d]\setminus \sigma^r(\xb)} = \zb_{[d]\setminus \sigma^r(\xb)}$. We want to show that $\sigma^r(\xb) = \sigma^r(\zb)$. 

If $\xb \notin A^r$, then $\sigma^r(\xb) = \emptyset$, so $\xb = \zb$ and $\sigma^r(\xb) = \sigma^r(\zb)$. 

If $\xb \in A^r$, let us first show that it implies $\zb \in A^r$. Let $\yb \in B(\zb, r)$, show that $\sigma(\yb) = \sigma(\zb)$. We have that $\yb-(\zb - \xb) \in B(\xb, r)$, and given that $\xb \in A^r$, we get $\sigma(\xb) = \sigma(\yb-(\zb - \xb))$. As $\sigma$ is a local swap, we have $\sigma(\xb) = \sigma(\zb)$, and we also get $\sigma(\yb-(\zb - \xb)) = \sigma(\yb)$ because
\begin{align*}
[\yb-(\zb - \xb)]_{[d]\setminus \sigma(\yb-(\zb - \xb))} &= \yb_{[d]\setminus \sigma(\yb-(\zb - \xb))}
\\ \Leftrightarrow \quad [\yb-(\zb - \xb)]_{[d]\setminus \sigma(\xb)} &= \yb_{[d]\setminus \sigma(\xb)}
\\ \Leftrightarrow \quad [(\zb - \xb)]_{[d]\setminus \sigma(\xb)} &= 0
\\ \Leftrightarrow \quad \xb_{[d]\setminus \sigma^r(\xb)} &= \zb_{[d]\setminus \sigma^r(\xb)}
\end{align*}
We can now conclude: $\sigma^r(\xb) = \sigma(\xb)$ as $\xb \in A^r$, and given that $\sigma$ is a local swap we get that $\sigma(\xb) = \sigma(\zb)$. Finally, as $\zb \in A^r$, we have $\sigma(\zb) = \sigma^r(\zb)$ and therefore $\sigma^r(\xb) = \sigma^r(\zb)$.

Assume $\sigma \subset \mathcal{H}_0^0$. Fix $\zb \in \mathbb{R}^d$, assume that $\sigma^r(\zb) \neq \emptyset$ and take $j \in \sigma^r(\zb)$. That implies $\zb \in A^r$, and therefore $\forall \yb \in B(\zb,r)$ we have $j \in \sigma^r(\zb) = \sigma(\zb) = \sigma(\yb) \subset \mathcal{H}_0^0(\yb)$. Therefore $j \in \cap_{\yb \in B(\zb, r)} \mathcal{H}_0^0(\yb) = \mathcal{H}_0^r(\zb)$. We then conclude that $\sigma^r \subset \mathcal{H}_0^r$.

The last statement is a concatenation of Proposition~\ref{swap-scores} and the fact that $\mu = \mu_{swap(\sigma^r)}$.
\end{proof} 

\section{Semi-synthetic Data Experiments}

Our simulations previously described were entirely based on synthetic data. Alternatively, using real SNPs data and then fitting a HMM model yields the same experimental results, which we did for data from the 1000 Genomes Project \citep{1000genomes}, where we obtained around 2000 individual samples for 27 distinct segments of chromosome 19 containing an average of 50 SNPs per segment, and filtered out SNPs that are extremely correlated (above 0.95). This is because HMMs can truly capture the covariate distribution of a SNP dataset and are a good model for a downstream feature selection with the knockoff procedure. For simplicity, and in order to scale with the number of samples (which is limited with real data), we described simulations based on synthetic covariates.

\section{Saliency-based Partitioning}\label{appendix:partition}

Saliency maps \cite{lipton2016mythos} have emerged as a popular tool for interpretability in Neural Networks. A saliency map allows to identify, under a trained model that minimizes some loss $L$, which input variation has the strongest impact on the loss at a given training point. Training a neural network on the concatenated vector $\Xb, \tilde{\Xb}$ to predict $Y$, saliency maps can be used as importance scores at any given training point. That is, denoting $g_{\theta}: \mathbb{R}^d \times \mathbb{R}^d \rightarrow \mathbb{R}$ a classifier parametrized by $\theta$ (such as a neural network), consider $\hat{\theta}_n$ the output of training such model on the actual data. We can now compute the saliency scores:
\begin{align*}
    \mathbb{S}, \tilde{\mathbb{S}}:
    \begin{cases}
    \mathbb{R}^d \times \mathbb{R}^d \rightarrow \mathbb{R}^d \times \mathbb{R}^d
    \\ \Xb_i , \tilde{\Xb}_i \mapsto \nabla_{\Xb, \tilde{\Xb}}L(g_{\hat{\theta}_n}(\Xb_i, \Hat{\Xb_i})-Y_i)
    \end{cases}
\end{align*}
Notice that the saliency scores are only computed for training points $\Xb_i , \tilde{\Xb}_i, Y_i$, but our definition of $\mathbb{S},\tilde{\mathbb{S}}$ assumes that we can expand the saliency scores to the whole feature space (for example, through smoothing). The reason why these mappings can not be immediately used as local importance scores is because of the training process: the output of the training process are the parameter estimates in $\Hat{\theta}_n = \Hat{\theta}_n(\Xb, \tilde{\Xb}, Y)$, which are constructed based on the global training set. Even though the saliency is local at a point the swap operation can not go through the training process, i.e. we can not relate $g_{\Hat{\theta}_n(\Xb, \tilde{\Xb},Y)}([\xb, \tilde{\xb}]_{swap(\sigma)})$ and $g_{\Hat{\theta}_n([\Xb, \tilde{\Xb}]_{swap(\sigma)},Y)}([\xb, \tilde{\xb}])$. This is due to the influence that a training point lying in one region of the space may have at another point (during evaluation) at a different region \cite{koh2017understanding}. Still, we can use these saliency scores to partition the space based on a subsample of the whole initial dataset, and then run the Knockoff procedure with local importance scores at points located in each subregion. This method has the advantage of being computationally less expensive than the previous one, especially in high dimensions.

\end{document}